\documentclass{article}

\usepackage[preprint]{neurips_2026}

\usepackage[utf8]{inputenc} 
\usepackage[T1]{fontenc}    
\usepackage{hyperref}       
\usepackage{url}            
\usepackage{booktabs}       
\usepackage{amsfonts}       
\usepackage{nicefrac}       
\usepackage{microtype}      
\usepackage{xcolor}         

\usepackage{graphicx}
\usepackage{float}
\usepackage{amsmath,amsthm,amssymb}
\usepackage{mathtools}
\usepackage{bm,mathabx}
\usepackage{pgfplots} 
\usepackage{pgfplotstable} 
\usepackage{subcaption}
\usepgfplotslibrary{colorbrewer}
\pgfplotsset{compat=1.18}
\usepackage[normalem]{ulem}
\usepackage{pdflscape}
\usepackage{wrapfig}
\usepackage{enumitem}
\usepackage[capitalize,noabbrev]{cleveref}

\usepackage{tikz}

\usetikzlibrary{arrows.meta}     
\usetikzlibrary{positioning}     
\usetikzlibrary{calc}            
\usetikzlibrary{shapes.geometric} 

\Crefname{equation}{Eq.}{Eqs.}
\Crefname{align}{Eq.}{Eqs.}
\Crefname{figure}{Fig.}{Figs.}
\Crefname{tabular}{Tab.}{Tabs.}
\Crefname{theorem}{Thm.}{Thms.}
\Crefname{lemma}{Lem.}{Lems.}
\Crefname{proposition}{Prop.}{Props.}
\Crefname{definition}{Def.}{Defs.}
\Crefname{algorithm}{Algo.}{Algs.}
\Crefname{corollary}{Corol.}{Corol.}
\Crefname{section}{Sec.}{Sec.}
\Crefname{appendix}{Appendix}{Appendices}
\Crefname{prop}{Prop.}{Props.}
\Crefname{task}{Task.}{Tasks.}
\Crefname{setting}{Setting.}{Settings.}
\Crefname{example}{Example}{Examples}

\renewcommand{\div}{\nabla\cdot}

\newcommand{\mr}{\mathrm}

\newtheorem{theorem}{Theorem}[section]

\newtheorem{lemma}{Lemma}[section]

\newtheorem{proposition}{Proposition}[section]

\theoremstyle{definition}
\newtheorem{example}{Example}[section]

\title{Lagrangian Flow Matching: A Least-Action Framework for Principled Path Design}
\author{%
  Shukai Du\textsuperscript{1}\thanks{Equal contribution.}
  \quad Junzhe Zhang\textsuperscript{2}\footnotemark[1]
  \quad Yiming Li\textsuperscript{1} \\
  \textsuperscript{1}Department of Mathematics, Syracuse University \\
  \textsuperscript{2}Department of EECS, Syracuse University \\
  \texttt{\{sdu113,jzhan403,yli658\}@syr.edu}
}

\begin{document}

\maketitle

\begin{abstract}
Flow matching trains a neural velocity field by regression against
a target velocity associated with a prescribed probability path
connecting a simple initial distribution to the data distribution.
A central design choice is the path itself. Existing constructions,
including rectified and optimal-transport-based paths, transport
samples along straight lines between coupled endpoints and thus
cover only a narrow class of dynamics. We observe that this corresponds to the simplest case of the \emph{least-action principle} in classical mechanics, in which the kinetic Lagrangian $\mathcal{L}=\tfrac12\|\dot x\|^2$ yields free-particle straight-line trajectories. Building on this observation, we propose \emph{Lagrangian flow matching}, a physics-based framework in which the probability path and velocity field are determined by minimizing the action of a general Lagrangian subject to the continuity equation and the prescribed endpoints. We show that this dynamic problem admits an equivalent static optimal transport (OT) formulation, yielding a family of simulation-free training objectives that recover OT-based flow matching as the kinetic special case and the trigonometric variance-preserving diffusion path as the harmonic-oscillator case. More general Lagrangians give rise to new probability paths and velocity fields, and numerical experiments show that they induce meaningful changes in the learned dynamics while remaining competitive with existing conditional flow matching models.
\end{abstract}
\section{Introduction}
\label{sec:_intro}

A natural way to approach generative modeling is from a continuous-time
perspective: rather than modeling the data distribution directly, one
constructs a time-dependent probability path $(p_t)_{t\in[0,1]}$ that
continuously transports a simple initial distribution $p_{\mathrm{init}}$
into the target distribution $p_{\mathrm{data}}$. Two important classes
of generative models built on this perspective are diffusion models
\cite{sohl2015deep,ho2020denoising,song2021scorebased} and flow-based
models
\cite{chen2018neural,grathwohl2018scalable,lipman2023flow,liu2023flow,tong2024improving},
which realize the transport through stochastic differential equations (SDE)
and ordinary differential equations (ODE), respectively.

We focus on flow-based models. An influential early class is continuous
normalizing flows (CNFs)
\cite{chen2018neural,grathwohl2018scalable}, which are expressive enough
to represent a broad class of continuous probability paths
\cite{song2021maximum} but are computationally expensive to train:
likelihood-based training requires repeated forward and backward ODE
solves. \emph{Flow matching} provides an efficient alternative by
directly regressing a neural velocity field $v_\theta(x,t)$ against a
target velocity field $v_t(x)$ associated with a \emph{prescribed
probability path} \cite{lipman2023flow,liu2023flow,tong2024improving}.
By avoiding ODE solves during training, this simulation-free objective
has enabled flow-based models to scale to high-dimensional data and
large-scale applications.

The performance of a flow matching model, however, is determined not
only by the regression objective itself but also by the prescribed
probability path $(p_t)$ and its associated velocity field $v_t$.
Existing methods choose these objects from a relatively narrow design
space: conditional Gaussian paths
\cite{lipman2023flow,tong2024improving}, affine interpolations between
coupled endpoints \cite{liu2023flow}, and optimal-transport-based
straight-line interpolations \cite{tong2024improving}. 
This raises a natural question: can the
probability paths and velocity fields used in flow matching be selected
from a more general, principled design space?

\begin{wrapfigure}[21]{r}{0.71\linewidth}
\vspace{-1.2\baselineskip}
\centering
\resizebox{\linewidth}{!}{%
\begin{tikzpicture}[
>=Stealth,
font=\small,
collabel/.style={font=\bfseries, align=center},
rowlabel/.style={font=\bfseries, align=center, text=black!75},
celllabel/.style={align=center, font=\footnotesize\itshape},
note/.style={align=center, font=\footnotesize, text=black!65},
bridgenote/.style={align=left, font=\footnotesize}
]
\fill[blue!4]   (0.04, 0.04) rectangle (3.96, 4.56);
\fill[purple!4] (4.04, 0.04) rectangle (7.96, 4.56);
\draw[thick, rounded corners=3pt, black!55]
  (0,0) rectangle (8,4.6);
\draw[thick, black!40] (4,0) -- (4,4.6);
\draw[thick, black!40] (0,2.3) -- (8,2.3);
\node[collabel] at (2,5.1)
{\small Kinetic Lagrangian \\[2pt]
{\normalfont\small $\mathcal{L} = \tfrac12\|\dot x\|^2$}};
\node[collabel] at (6,5.1)
{\small General Lagrangian \\[2pt]
{\normalfont\small $\mathcal{L}(x,\dot x,t) = K(\dot x) - V(x)$}};
\node[rowlabel, rotate=90] at (-0.55,3.45)
{\small Deterministic \\ trajectory};
\node[rowlabel, rotate=90] at (-0.55,1.15)
{\small Probability \\ path};
\node[celllabel, text=blue!55!black] at (2,4.20)
  {free-particle trajectory};
\node at (2,3.4)
  {\includegraphics[width=3cm]{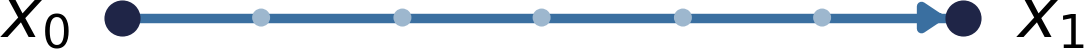}};
\node[note] at (2,2.60) {$\gamma(t)=(1-t)x_0+t x_1$};
\node[celllabel, text=purple!65!black] at (6,4.20)
  {least-action trajectory};
\node at (6,3.4)
  {\includegraphics[width=3cm]{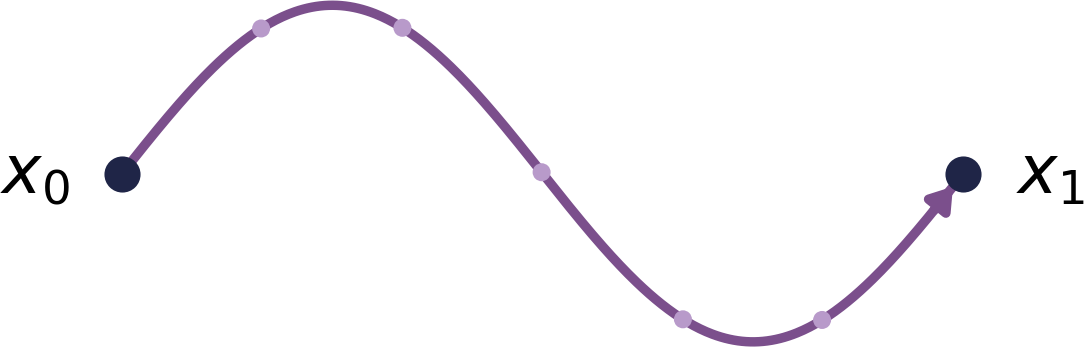}};
\node[note] at (6,2.60)
  {$\partial\mathcal{L}/\partial x = \tfrac{d}{dt}\,\partial\mathcal{L}/\partial \dot x$};
\node[celllabel, text=blue!55!black] at (2,1.90)
  {OT-based flow matching};
\node at (2,0.9)
  {\includegraphics[width=3cm]{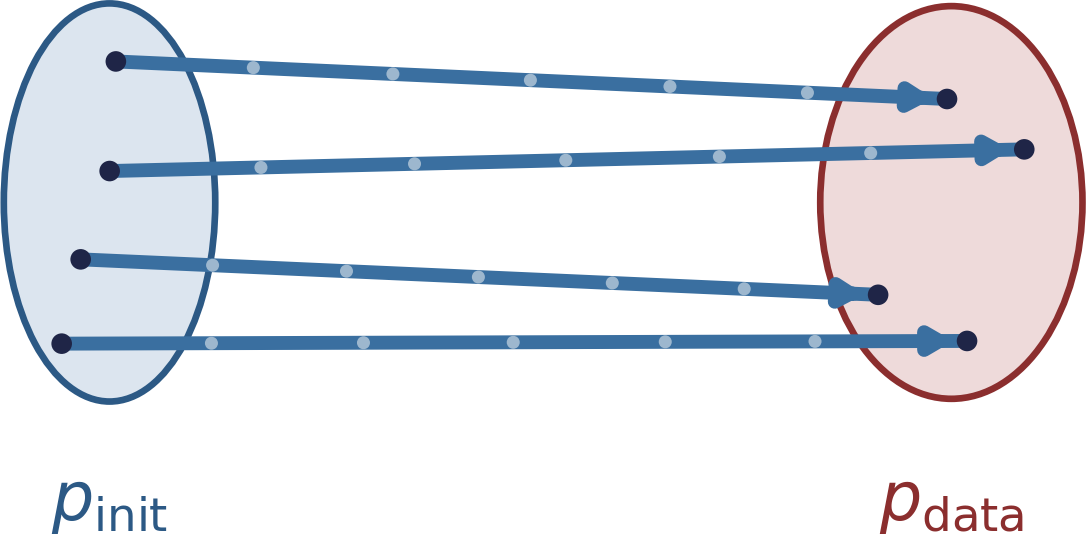}};
\node[celllabel, text=purple!65!black] at (6,1.90)
  {Lagrangian flow matching};
\node at (6,0.9)
  {\includegraphics[width=3cm]{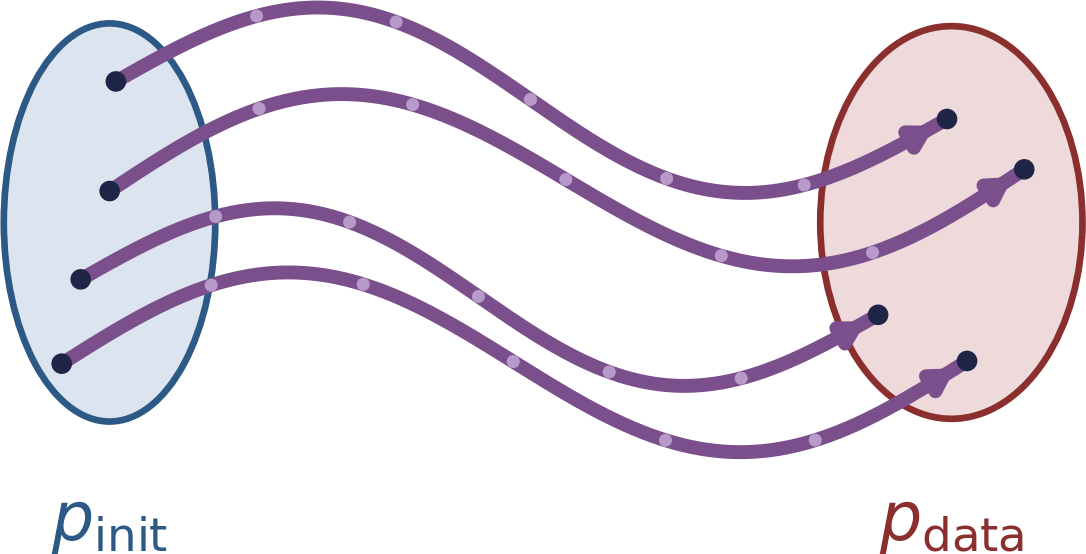}};
\draw[decorate, decoration={brace, amplitude=5pt},
      thick, black!55]
  (8.1,3.45) -- (8.1,1.15);
\node[rowlabel, rotate=270] at (8.7,2.4)
{\small Static--Dynamic \\ Equivalence {\normalfont\small (Thm.~3.1)}};
\end{tikzpicture}%
}
\caption{Lagrangian flow matching unifies trajectory selection (top row) and probability-path construction (bottom row) under the least-action principle, linked by the static--dynamic equivalence (\Cref{thm:connection_pi_v_p}). The kinetic Lagrangian (left column) yields straight-line trajectories and recovers OT-based flow matching; a general Lagrangian (right column) yields curved trajectories and induces new probability paths.}
\label{fig:lagrangian_flow_table}
\end{wrapfigure}
We address this question through the lens of the
least-action principle, a foundational variational principle in classical
mechanics~\citep{arnold1989mathematical}. Given a Lagrangian $\mathcal{L}(x, \dot x, t)$, the principle selects, between two fixed endpoints, the trajectory that minimizes the corresponding action. The choice of Lagrangian fully determines the dynamics: the kinetic case yields straight-line free-particle trajectories (\Cref{fig:lagrangian_flow_table}, top-left), while a 
general Lagrangian 
produces curved paths governed by the Euler--Lagrange equations (\Cref{fig:lagrangian_flow_table}, top-right).

The same principle applies naturally to flow matching. Existing construction based on affine and OT-based paths transport samples
along straight-line trajectories between coupled endpoints
(\Cref{fig:lagrangian_flow_table}, bottom-left), corresponding to the
kinetic case. Replacing the kinetic Lagrangian by a more general one
yields curved least-action trajectories between matched endpoints,
which in turn induce new probability paths
(\Cref{fig:lagrangian_flow_table}, bottom-right). This transfer from trajectories to probability paths are made rigorous by the classical
static--dynamic equivalence in optimal transport
\cite{benamou2000computational,bernard2007optimal,villani2009optimal}.

Several recent works have also explored variational or least-action
formulations for learning transport dynamics
\cite{neklyudov2023action,neklyudov2024a,pooladian2024neural,koshizuka2022neural}.
Comparatively little work, however, has aimed to cast least-action dynamics directly as a flow-matching problem. 
We aim to fill this gap by introducing a
Lagrangian perspective to flow matching: the Lagrangian defines endpoint costs,
selects couplings, and constructs explicit least-action paths. These paths are
then used to define a standard flow-matching regression objective. In this sense,
our approach aims to combine the flexibility of least-action-induced probability
paths with the simplicity and efficiency of the simulation-free regression in flow matching.

Building on this perspective, we propose \emph{Lagrangian flow
matching}, a framework in which the probability path and velocity
field are determined by minimizing a Lagrangian action subject to the
continuity equation and the endpoint constraints
$p_0 = p_{\mathrm{init}}$ and $p_1 = p_{\mathrm{data}}$. Our
contributions are threefold. First, we propose a physics-based
framework for selecting flow matching probability paths and velocity
fields through a least-action principle. Second, we derive a family
of simulation-free training objectives, and show that several
existing constructions, including OT-based and conditional flow
matching models, are recovered as special cases. Third, we derive new
flow matching models from more general Lagrangians, in particular a
\emph{harmonic family} indexed by a frequency $\omega \in (0, \pi)$
that continuously deforms the 
least-action path,
and recovers OT-based flow matching and the trigonometric variance-preserving schedule at its two endpoints. We validate the framework on synthetic
2D data, single-cell trajectory interpolation, and CIFAR-10 image
generation, and show that different Lagrangians induce meaningful
changes in the learned dynamics while remaining competitive with
existing conditional flow matching models. All proofs of the results and details about the experimental setup are provided in \Cref{app:_b,app:exp_setup}. Code is available at \url{https://github.com/junzhez/lagrangian-flow-matching}.

\vspace{-0.5\baselineskip}
\subsection{Preliminaries and Related Work}
\label{sec:_prelim}
\vspace{-0.5\baselineskip}

Throughout, $p_{\mathrm{init}}$ and $p_{\mathrm{data}}$ are
probability densities on $\mathbb R^d$. The goal of generative
modeling is to learn a transformation pushing $p_{\mathrm{init}}$ to
$p_{\mathrm{data}}$, given finitely many samples from
$p_{\mathrm{data}}$. This setting includes both the standard case in
which $p_{\mathrm{init}}$ is easily sampled (e.g., a standard
Gaussian) and the more general case in which both
$p_{\mathrm{init}}$ and $p_{\mathrm{data}}$ are accessed only through
samples \cite{albergo2023building,liu2023flow,tong2024improving}. We
focus on continuous-time approaches realizing this transformation
through a probability path $(p_t)_{t\in[0,1]}$ with
$p_0=p_{\mathrm{init}}$ and $p_1=p_{\mathrm{data}}$.
 
\textbf{Flow matching} trains a parametrized velocity field
$v_\theta(x,t)$ by regressing against a target field
$v_t(x)$:
\begin{align}
\label{eq:flow_match_obj}
\min_\theta\;
\mathbb E_{t\sim \mathcal U[0,1],\,x\sim p_t}
\bigl\|v_\theta(x,t)-v_t(x)\bigr\|^2,
\end{align}
where $(p_t)_{t\in[0,1]}$ satisfies the endpoint conditions
$p_0=p_{\mathrm{init}}$, $p_1=p_{\mathrm{data}}$, and $v_t$ generates
this path through the continuity equation
$\partial_t p_t+\nabla\cdot(p_t v_t)=0$. Infinitely many pairs
$(p_t,v_t)$ satisfy these conditions, and the model's behavior
depends on the choice. 
 
\textbf{Conditional flow matching} addresses this intractability by
expressing $p_t$ as a mixture
$p_t(x)=\mathbb E_{z\sim q}[p_t(x\mid z)]$ over a conditioning
variable $z\sim q$ \cite{lipman2023flow,tong2024improving}. If
$v_t(x\mid z)$ generates $p_t(\cdot\mid z)$, then the marginal
velocity is the posterior average
$v_t(x)=\mathbb E[v_t(x\mid z)\mid x_t=x]$. The marginal objective
\eqref{eq:flow_match_obj} is equal, up to an additive constant, to
\begin{align}
\label{eq:cond_flow_match_obj}
\min_\theta\;
\mathbb E_{t\sim \mathcal U[0,1],\,z\sim q,\,x\sim p_t(\cdot\mid z)}
\bigl\|v_\theta(x,t)-v_t(x\mid z)\bigr\|^2,
\end{align}
which is tractable when the conditional paths and velocities admit
closed-form expressions. A representative example is the affine
construction of \cite{lipman2023flow}: with $z=y\sim p_{\mathrm{data}}$
and $p_{\mathrm{init}}=\mathcal N(0,I)$, the conditional trajectory
is $\psi_t(x\mid y)=(1-(1-\varepsilon)t)x+ty$. We recover this
construction as the $\omega\to 0$ limit of conditional Lagrangian
flow matching (\Cref{ex:harmonic_to_ot}).
 
\textbf{Dynamic optimal transport.} The least-action perspective
developed here builds on the dynamic formulation of optimal transport
due to Benamou and Brenier \cite{benamou2000computational}. The
squared $2$-Wasserstein distance between $p_0$ and $p_1$ admits the
representation
\begin{align}
\label{eq:dyn_opt_transp}
W_2^2(p_0,p_1)
=
\inf_{(p_t,v_t)}
\int_0^1\!\!\int_{\mathbb R^d}
\|v_t(x)\|^2\,p_t(x)\,dx\,dt,
\end{align}
where the infimum is over pairs $(p_t,v_t)$ satisfying the continuity
equation and the endpoint constraints $p_{t=0}=p_0$, $p_{t=1}=p_1$.
This is the kinetic special case of the Lagrangian framework
developed below, and the basic viewpoint behind OT-based flow
matching \cite{tong2024improving}. We defer readers to  \Cref{app:related_work} for a more detailed discussion on related work.
\section{Lagrangian Flow Matching}
\label{sec:lagrangian_flow_matching}

This section introduces our main framework. We first recall the
elements of Lagrangian mechanics needed to state a least-action
principle, then formulate the \emph{Lagrangian optimal transport}
problem and establish the static--dynamic equivalence that converts
it into a tractable training objective
(\Cref{sec:lagrangian_ot}). The conditional formulation, which makes
the framework practical at scale, is developed in
\Cref{sec:conditional_lfm}.

\begin{wrapfigure}[11]{r}{0.5\linewidth}
\vspace{-0.2in}
\centering
\includegraphics[width=\linewidth]{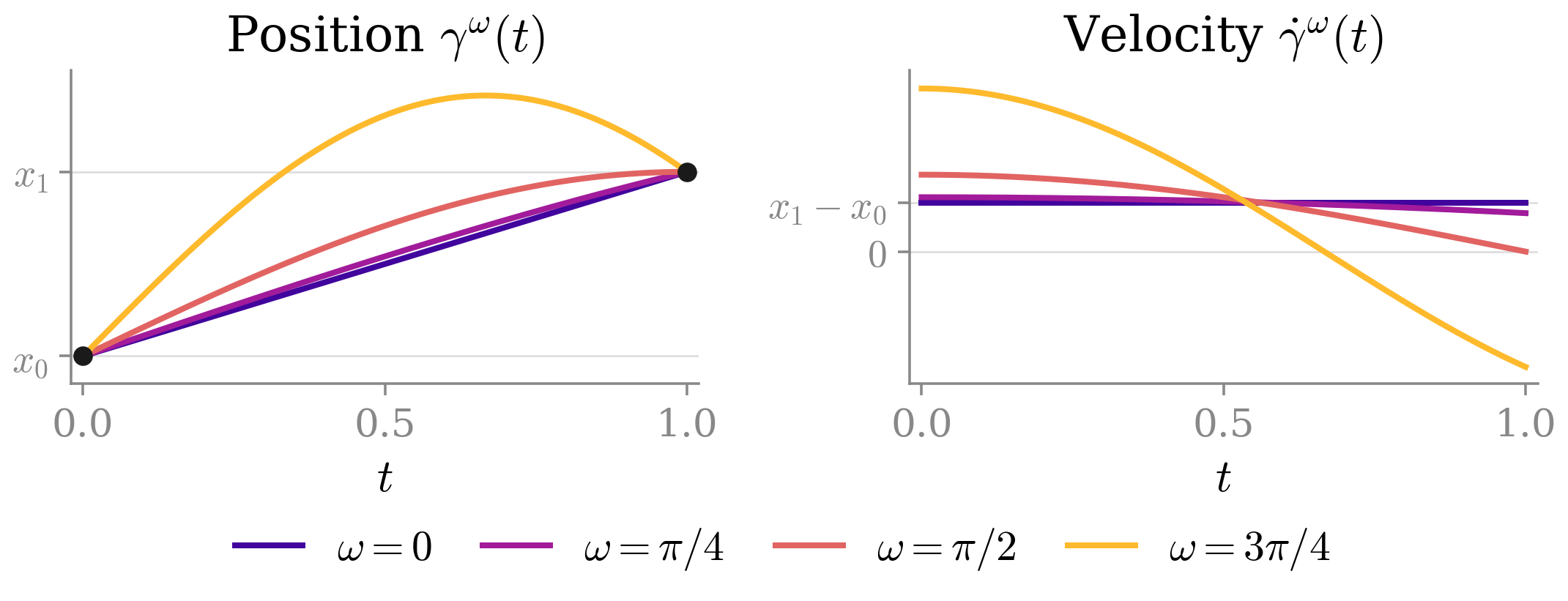}
\caption{Position $\gamma^\omega(t)$ and velocity
$\dot\gamma^\omega(t)$ of the harmonic least-action trajectory
\eqref{eq:harmonic_gamma} for $\omega\in(0,\pi)$, with endpoints
$x_0=0$ and $x_1=1$.}
\label{fig:harmonic_pos_vel}
\end{wrapfigure}

Lagrangian mechanics provides a variational principle for selecting
dynamics through an action functional, and has long served as a
unifying language for describing physical
systems~\citep{arnold1989mathematical}. A \emph{Lagrangian} is a
function $\mathcal L(x,v,t)$ of position $x\in\mathbb R^d$, velocity
$v\in\mathbb R^d$, and time $t\in[0,1]$ encoding the dynamics of the
system. Given a smooth curve $\gamma:[0,1]\to\mathbb R^d$, its
\emph{action} is
\[
S[\gamma] \;=\; \int_0^1
\mathcal L\bigl(\gamma(t),\dot\gamma(t),t\bigr)\,dt.
\]
The \emph{least-action principle} selects, between two fixed
endpoints $\gamma(0)=x_0$ and $\gamma(1)=x_1$, the curve that
minimizes $S[\gamma]$. A standard variational argument shows that
any minimizer satisfies the \emph{Euler--Lagrange equation}
\[
\frac{d}{dt}\!\left(\frac{\partial \mathcal L}{\partial v}\right)
- \frac{\partial \mathcal L}{\partial x} \;=\; 0,
\]
which, together with the boundary conditions, defines a boundary-value
problem that admits a unique \emph{closed-form} solution for some choices of Lagrangians, as illustrated by the following two examples.

\begin{example}[Free particle]
\label{ex:free_particle}
The simplest Lagrangian is the kinetic energy of a free particle,
$\mathcal L_{\mathrm{free}}(x,v) = \tfrac12\|v\|^2$. The Euler--Lagrange
equation reduces to $\ddot\gamma(t)=0$, so the least-action trajectory
between $x_0$ and $x_1$ is the affine interpolation
$\gamma_{x_0,x_1}(t) = (1-t)x_0+t x_1$, with constant velocity
$\dot\gamma_{x_0,x_1}(t)=x_1-x_0$. (\Cref{fig:harmonic_pos_vel}, $\omega=0$ curve). The corresponding endpoint cost is
the quadratic cost
$c_{\mathcal L_{\mathrm{free}}}(x_0,x_1)=\tfrac12\|x_1-x_0\|^2$,
underlying classical Wasserstein optimal transport.
$\hfill\blacksquare$
\end{example}
\begin{example}[Harmonic oscillator]
\label{ex:harmonic_oscillator}
A canonical non-kinetic example is
$\mathcal L_\omega(x,v) = \tfrac12\|v\|^2 - \tfrac12\omega^2\|x\|^2$
with a frequency $0<\omega<\pi$, adding a quadratic potential
$V(x)=\tfrac12\omega^2\|x\|^2$ to the kinetic term. The
Euler--Lagrange equation $\ddot\gamma+\omega^2\gamma=0$ has unique
solution
\begin{align}
\label{eq:harmonic_gamma}
\gamma_{x_0,x_1}^{\omega}(t)
\;=\; \frac{\sin(\omega(1-t))}{\sin\omega}\,x_0
+ \frac{\sin(\omega t)}{\sin\omega}\,x_1,
\end{align}
with time-varying velocity
\begin{align}
\label{eq:harmonic_gamma_v}
\dot\gamma_{x_0,x_1}^{\omega}(t)
= -\frac{\omega\cos(\omega(1-t))}{\sin\omega}\,x_0
+ \frac{\omega\cos(\omega t)}{\sin\omega}\,x_1.
\end{align}
As $\omega\to 0$, $\sin(\omega s)/\sin\omega\to s$, so both expressions reduce to those of \Cref{ex:free_particle}. The harmonic oscillator thus smoothly deforms the free-particle dynamics, with curvature controlled by the frequency $\omega$ (\Cref{fig:harmonic_pos_vel}, $\omega = 0, \pi/4, \pi/2$ and $3\pi/4$ curve). $\hfill\blacksquare$
\end{example}
The least-action principle gives a recipe for selecting a deterministic
trajectory between two fixed endpoints. To use this recipe in flow
matching, where endpoints are samples from $p_{\mathrm{init}}$ and
$p_{\mathrm{data}}$ rather than fixed points, we need a way to (i)
couple endpoints across the two distributions, and (ii) assemble the
resulting least-action trajectories into a probability path. This is
the role of \emph{Lagrangian optimal transport}, developed next, which
generalizes the Benamou--Brenier dynamic formulation
\cite{benamou2000computational} from the kinetic Lagrangian to general
Lagrangians \cite{bernard2007optimal,villani2009optimal}.

\subsection{Lagrangian Optimal Transport}
\label{sec:lagrangian_ot}

For $x_0,x_1\in\mathbb R^d$, consider the least-action problem
\begin{equation}
\label{eq:path_min_action}
\gamma_{x_0,x_1}
\;:=\;
\arg\min_{\substack{\gamma\in AC([0,1];\mathbb R^d)\\
\gamma(0)=x_0,\,\gamma(1)=x_1}}
\int_0^1
\mathcal L(\gamma(t),\dot\gamma(t),t)\,dt,
\end{equation}
where $AC([0,1];\mathbb R^d)$ denotes the space of absolutely continuous
curves. 
We restrict throughout to Lagrangians of the classical separable form
\(\mathcal L(x,v,t)=K(v)-V(x)\), with \(K\) a quadratic
form satisfying \(K(v)\ge \alpha\|v\|^2\) for some \(\alpha>0\). The potential
\(V\ge0\) is taken sufficiently regular and controlled
by the kinetic term so that \eqref{eq:path_min_action} admits a
unique minimizer for every \((x_0,x_1)\).
The corresponding least-action cost is
\begin{equation}
\label{eq:cost_L}
c_{\mathcal L}(x_0,x_1)
\;:=\;
\int_0^1
\mathcal L\bigl(
\gamma_{x_0,x_1}(t),
\dot\gamma_{x_0,x_1}(t),
t
\bigr)\,dt.
\end{equation}

Given probability densities $p_{\mathrm{init}}$ and $p_{\mathrm{data}}$
on $\mathbb R^d$, using $c_{\mathcal L}$ as the transport cost yields
the \emph{static} Lagrangian optimal transport problem
\begin{equation}
\label{eq:static_problem}
\mathcal A_{\mathrm{stat}}
\;:=\;
\inf_{\pi\in\Pi(p_{\mathrm{init}},p_{\mathrm{data}})}
\int_{\mathbb R^d\times\mathbb R^d}
c_{\mathcal L}(x_0,x_1)\,d\pi(x_0,x_1),
\end{equation}
where $\Pi(p_{\mathrm{init}},p_{\mathrm{data}})$ is the set of couplings
between $p_{\mathrm{init}}$ and $p_{\mathrm{data}}$. The corresponding
\emph{dynamic} formulation minimizes the action of the density flow:
\begin{equation}
\label{eq:min_act_prob}
\mathcal A_{\mathrm{dyn}}
\;:=\;
\inf_{(p,v)\in\mathfrak D}
\int_0^1\!\!\int_{\mathbb R^d}
\mathcal L(x,v(x,t),t)\,p(x,t)\,dx\,dt,
\end{equation}
where
$\mathfrak D := \bigl\{(p,v):\,
\partial_t p+\nabla\cdot(vp)=0,\,
p(\cdot,0)=p_{\mathrm{init}},\,
p(\cdot,1)=p_{\mathrm{data}}\bigr\}$.

The two formulations are equivalent. Recall that for a measurable map
$T:\mathbb R^m\to\mathbb R^n$ and a probability measure $\mu$ on
$\mathbb R^m$, the pushforward $T_\#\mu$ is defined by
$(T_\#\mu)(A):=\mu(T^{-1}(A))$, and that the evaluation map
$e_t:AC([0,1];\mathbb R^d)\to\mathbb R^d$ is $e_t(\gamma)=\gamma(t)$.

\begin{theorem}[Static--Dynamic Equivalence]
\label{thm:connection_pi_v_p}
The static and dynamic Lagrangian optimal transport problems have the
same value: $\mathcal A_{\mathrm{stat}}=\mathcal A_{\mathrm{dyn}}$.
Moreover, if $\pi^*$ is an optimal plan for \eqref{eq:static_problem},
the probability measure
$\eta^*:=(\gamma_{x_0,x_1})_\#\pi^*$ on $AC([0,1];\mathbb R^d)$
induces an optimal pair for \eqref{eq:min_act_prob} via
\[
p^*(\cdot,t):=(e_t)_\#\eta^*,
\qquad
v^*(x,t)
\;:=\;
\mathbb E_{\gamma\sim\eta^*}\bigl[\dot\gamma(t)\,\big|\,\gamma(t)=x\bigr].
\]
\end{theorem}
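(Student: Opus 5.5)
We prove the two inequalities $\mathcal A_{\mathrm{dyn}}\le\mathcal A_{\mathrm{stat}}$ and $\mathcal A_{\mathrm{stat}}\le\mathcal A_{\mathrm{dyn}}$ separately. The first is a direct construction from $\pi^*$ and also yields the optimality claim. The second passes from an arbitrary admissible pair $(p,v)$ to a measure on curves via the superposition principle. Throughout we use the separable structure $\mathcal L=K(v)-V(x)$: $K$ is a positive definite quadratic form, hence strictly convex in $v$, and the potential term is linear in $p$.

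\emph{Step 1: $\mathcal A_{\mathrm{dyn}}\le\mathcal A_{\mathrm{stat}}$ and admissibility of $(p^*,v^*)$.} Let $\eta^*=(\gamma_{x_0,x_1})_\#\pi^*$ and $p^*_t=(e_t)_\#\eta^*$. The endpoint conditions hold because $e_0\circ\gamma_{x_0,x_1}=x_0$ and $e_1\circ\gamma_{x_0,x_1}=x_1$, so $p^*_0=p_{\mathrm{init}}$ and $p^*_1=p_{\mathrm{data}}$.

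For the continuity equation, fix $\varphi\in C_c^\infty$. Then
\[
\frac{d}{dt}\int\varphi\,dp^*_t=\int\nabla\varphi(\gamma(t))\cdot\dot\gamma(t)\,d\eta^*(\gamma).
\]
Disintegrating $\eta^*$ along $e_t$ rewrites the right-hand side as $\int\nabla\varphi\cdot v^*_t\,dp^*_t$, which is the weak form of the equation.

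For the action, $V$ enters linearly, so $\int\!\!\int V\,p^*\,dx\,dt=\int\!\int_0^1 V(\gamma(t))\,dt\,d\eta^*$. Since $K$ is convex, Jensen's inequality on the conditional law gives $K(v^*(x,t))\le\mathbb E[K(\dot\gamma(t))\mid\gamma(t)=x]$. Integrating,
\[
\int_0^1\!\!\int\mathcal L(x,v^*,t)\,p^*\,dx\,dt\le\int S[\gamma]\,d\eta^*=\int c_{\mathcal L}\,d\pi^*=\mathcal A_{\mathrm{stat}}.
\]
Interchanging derivatives and integrals here needs integrability of $K(\dot\gamma)$ under $\eta^*$. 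This follows from finiteness of $\mathcal A_{\mathrm{stat}}$ together with the standing assumption that $V$ is controlled by $K$.

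\emph{Step 2: $\mathcal A_{\mathrm{stat}}\le\mathcal A_{\mathrm{dyn}}$.} Take $(p,v)\in\mathfrak D$ with finite action; the growth assumptions give $\int\!\!\int\|v\|^2p<\infty$. The superposition principle (Ambrosio--Gigli--Savaré, Thm.~8.2.1) then provides a probability measure $\eta$ on $AC([0,1];\mathbb R^d)$ with two properties. First, $(e_t)_\#\eta=p_t$. Second, $\eta$-a.e.\ curve solves $\dot\gamma=v(\gamma,t)$.

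Consequently $\int\!\!\int\mathcal L(x,v,t)\,p\,dx\,dt=\int S[\gamma]\,d\eta$. Set $\pi=(e_0,e_1)_\#\eta\in\Pi(p_{\mathrm{init}},p_{\mathrm{data}})$. Since $\gamma_{x_0,x_1}$ minimizes the action between its endpoints, $S[\gamma]\ge c_{\mathcal L}(\gamma(0),\gamma(1))$ for every curve. Integrating over $\eta$ gives the action of $(p,v)$ is at least $\int c_{\mathcal L}\,d\pi\ge\mathcal A_{\mathrm{stat}}$. Taking the infimum over $\mathfrak D$ gives the inequality.

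Combining both steps, $\mathcal A_{\mathrm{stat}}=\mathcal A_{\mathrm{dyn}}$, and Step 1 shows that $(p^*,v^*)$ attains this common value, so it is optimal.

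\emph{Main obstacle.} The delicate point is the superposition step. It requires care because the Lagrangian is not bounded below: $-V$ may be unbounded, so finite action does not immediately give the $L^2$ velocity integrability that superposition needs. I would first use the hypothesis that $V$ is controlled by the kinetic term to bound $\int\!\!\int V\,p$ by a multiple of $\int\!\!\int\|v\|^2p$ plus a constant. That bound would also justify the integrability used in Step 1 and the measurability of $(x_0,x_1)\mapsto\gamma_{x_0,x_1}$. The latter follows from uniqueness of minimizers and a standard measurable selection argument.
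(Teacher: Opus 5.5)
Your proposal is correct and takes essentially the same route as the paper. Both arguments push $\pi^*$ forward along least-action curves, check the continuity equation by disintegration, and bound the action by Jensen to get $\mathcal A_{\mathrm{dyn}}\le\mathcal A_{\mathrm{stat}}$; both then get the reverse inequality from the superposition principle together with $S[\gamma]\ge c_{\mathcal L}(\gamma(0),\gamma(1))$. The only cosmetic difference is that you apply Jensen to $K$ alone and treat $V$ linearly, whereas the paper applies it to $\mathcal L(x,\cdot,t)$ directly, which is the same thing.
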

\Cref{thm:connection_pi_v_p} shows that the least-action probability path
is obtained by pushing an optimal static coupling forward along
least-action trajectories. This converts training into a two-step
procedure. First, solve the static problem \eqref{eq:static_problem}
for an optimal coupling $\pi^*$. Then, for samples
$(x_0,x_1)\sim\pi^*$, use the closed-form least-action trajectory
$\gamma_{x_0,x_1}$ as a velocity target:
\begin{align}
\label{eq:training_obj}
\min_\theta\;
\mathbb E_{t\sim \mathcal U[0,1]}\,
\mathbb E_{(x_0,x_1)\sim\pi^*}
\bigl\|
v_\theta\bigl(\gamma_{x_0,x_1}(t),t\bigr)
-\dot\gamma_{x_0,x_1}(t)
\bigr\|^2.
\end{align}
This sidesteps the dynamic problem \eqref{eq:min_act_prob} entirely,
preserving the simulation-free character of flow matching. The
following examples illustrate the procedure for the harmonic
Lagrangian of \Cref{ex:harmonic_oscillator} and recover several
existing flow matching constructions as special cases.

\begin{figure}[t]
\centering
\begin{subfigure}[t]{0.195\linewidth}
  \centering
  \includegraphics[width=\linewidth]{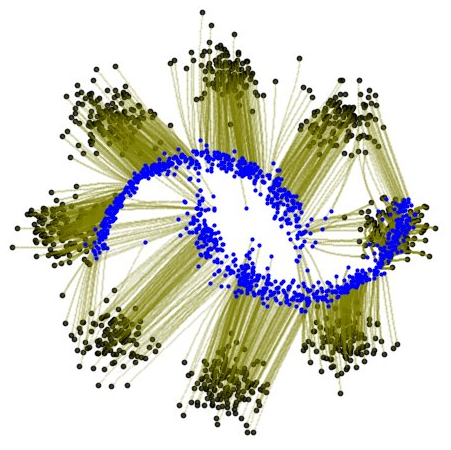}
  \caption{\centering OT-CFM\\($\omega\!\to\!0$)}
  \label{fig:5gaussian_ot}
\end{subfigure}\hfill
\begin{subfigure}[t]{0.195\linewidth}
  \centering
  \includegraphics[width=\linewidth]{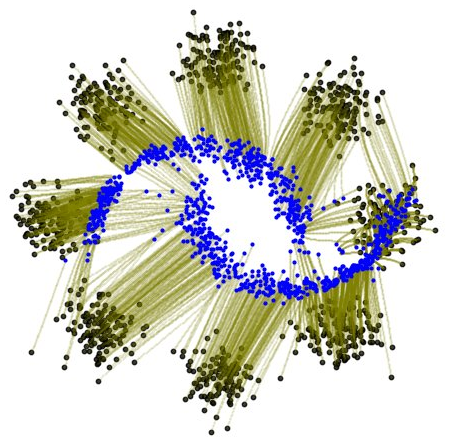}
  \caption{\centering Harmonic\\$\omega=0.001$}
  \label{fig:5gaussian_w001}
\end{subfigure}\hfill
\begin{subfigure}[t]{0.195\linewidth}
  \centering
  \includegraphics[width=\linewidth]{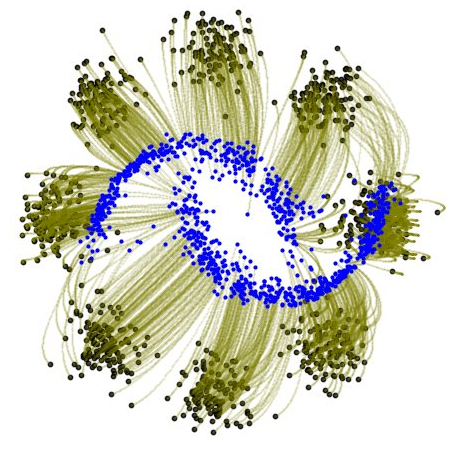}
  \caption{\centering Harmonic\\$\omega=1$}
  \label{fig:5gaussian_w1}
\end{subfigure}\hfill
\begin{subfigure}[t]{0.195\linewidth}
  \centering
  \includegraphics[width=\linewidth]{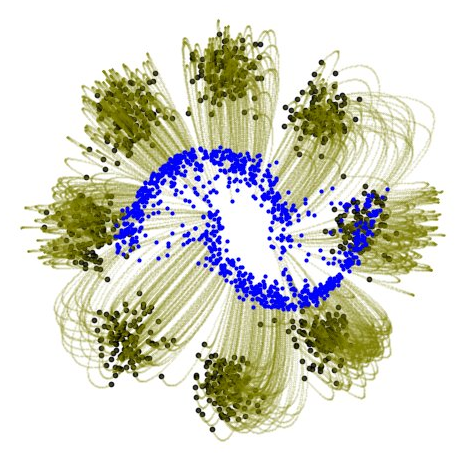}
  \caption{\centering Harmonic\\$\omega=\pi/2$}
  \label{fig:5gaussian_wpi2}
\end{subfigure}\hfill
\begin{subfigure}[t]{0.195\linewidth}
  \centering
  \includegraphics[width=\linewidth]{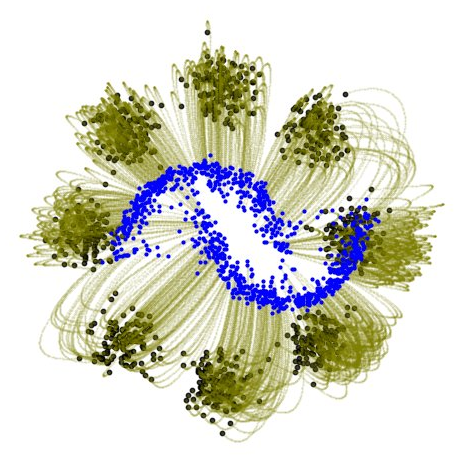}
  \caption{\centering Stochastic\\interpolant}
  \label{fig:5gaussian_si}
\end{subfigure}
\caption{Mini-batch harmonic flow matching trajectories from a 8-Gaussian-mixture to a double-moon target, swept over $\omega\in(0,\pi/2]$ and compared against OT-CFM and the stochastic interpolant. Endpoints (black) are matched by quadratic-cost OT and connected along least-action trajectories (olive); the data manifold is shown in blue. See \Cref{ex:harmonic_to_ot,ex:harmonic_to_trig} for the limiting cases.}
\label{fig:5gaussian_sweep}
\end{figure}

\begin{example}[Harmonic flow]
\label{ex:harmonic_flow}
For $\mathcal L_\omega$ of \Cref{ex:harmonic_oscillator},
substituting \eqref{eq:harmonic_gamma} into the action yields the cost
\[
c_\omega(x_0,x_1)
\;=\; \frac{\omega}{2\sin\omega}
\bigl[\cos\omega\,(\|x_0\|^2+\|x_1\|^2) - 2\,x_0\!\cdot\!x_1\bigr].
\]
For any coupling $\pi\in\Pi(p_{\mathrm{init}},p_{\mathrm{data}})$, the
quadratic terms in $c_\omega$ integrate to constants determined by the
marginals alone:
\begin{equation*}
\int \|x_0\|^2\,d\pi(x_0,x_1)
= \mathbb E_{x_0\sim p_\mr{init}}\|x_0\|^2
=: M_0,\quad
\int \|x_1\|^2\,d\pi(x_0,x_1)
= \mathbb E_{x_1\sim p_\mr{data}}\|x_1\|^2
=: M_1,
\end{equation*}
which are finite under the standard second-moment assumption on
$p_{\mathrm{init}}$ and $p_{\mathrm{data}}$. The transport cost
therefore decomposes as
\[
\int c_\omega(x_0,x_1)\,d\pi(x_0,x_1)
\;=\; \frac{\omega\cos\omega}{2\sin\omega}\,(M_0+M_1)
\;-\; \frac{\omega}{\sin\omega}\int x_0\!\cdot\!x_1\,d\pi(x_0,x_1),
\]
where the first term does not depend on $\pi$. Minimizing over
$\pi$ is therefore equivalent to maximizing the cross-correlation
$\int x_0\!\cdot\!x_1\,d\pi$, and by the polarization identity
$x_0\!\cdot\!x_1 = \tfrac12(\|x_0\|^2+\|x_1\|^2-\|x_0-x_1\|^2)$, this
is in turn equivalent to minimizing
$\int \tfrac12\|x_0-x_1\|^2\,d\pi$. The static problem
\eqref{eq:static_problem} with cost $c_\omega$ thus shares its
minimizer with the quadratic-cost OT problem for every
$\omega\in(0,\pi)$; we denote this common coupling by
$\pi^*_{\mathrm{OT}}$. The training objective
\eqref{eq:training_obj} becomes
\[
\min_\theta\;
\mathbb E_{t,\,(x_0,x_1)\sim\pi^*_{\mathrm{OT}}}
\bigl\|
v_\theta\bigl(\gamma_{x_0,x_1}^{\omega}(t),t\bigr)
-\dot\gamma_{x_0,x_1}^{\omega}(t)
\bigr\|^2,
\]
where the position $\gamma_{x_0,x_1}^{\omega}(t)$ and velocity
$\dot\gamma_{x_0,x_1}^{\omega}(t)$ of the harmonic oscillator are given
by \Cref{eq:harmonic_gamma,eq:harmonic_gamma_v}. This preserves the
OT endpoint coupling while replacing the affine interpolation between
paired endpoints with the curved harmonic trajectory.
\Cref{fig:5gaussian_sweep} visualizes how the resulting trajectories
vary with $\omega \in \{0.001,\, 1,\, \pi/2\}$ on an 8-Gaussian-mixture
source and a two-moons target.
$\hfill\blacksquare$
\end{example}

\begin{example}[OT-based flow matching as the $\omega\to 0$ limit]
\label{ex:harmonic_to_ot}
Taking $\omega\to 0$ in \Cref{ex:harmonic_flow}, the harmonic
trajectory reduces to the affine interpolation $(1-t)x_0+t x_1$ and
the cost reduces to $\tfrac12\|x_1-x_0\|^2$, so the training objective
becomes
\[
\min_\theta\;
\mathbb E_{t,\,(x_0,x_1)\sim\pi^*_{\mathrm{OT}}}
\bigl\|
v_\theta\bigl((1-t)x_0+t x_1,\,t\bigr)
-(x_1-x_0)
\bigr\|^2,
\]
recovering OT-based flow matching \cite{tong2024improving}
(\Cref{fig:5gaussian_ot}; the small-$\omega$ case in \Cref{fig:5gaussian_w001} is visually indistinguishable). The
affine construction of \cite{lipman2023flow} is the conditional
analogue of this limit, recovered in \Cref{sec:conditional_lfm} by
a different choice of conditioning variable. $\hfill\blacksquare$
\end{example}

\begin{example}[Trigonometric interpolation at $\omega=\pi/2$]
\label{ex:harmonic_to_trig}
At $\omega=\pi/2$, the harmonic trajectory reduces to the
trigonometric interpolation
\[
\gamma_{x_0,x_1}^{\pi/2}(t)
\;=\; \cos\!\left(\frac{\pi t}{2}\right)x_0
+ \sin\!\left(\frac{\pi t}{2}\right)x_1,
\]
which connects to two existing constructions. First, it is the
spatial interpolation underlying stochastic-interpolant
constructions \cite{albergo2025stochastic}
(\Cref{fig:5gaussian_sweep}\subref{fig:5gaussian_wpi2} and
\subref{fig:5gaussian_si} show the two constructions side by side);
here, however, the trajectory is fully deterministic once endpoints
are fixed, with randomness entering only through endpoint sampling.
Second, with $p_{\mathrm{init}}=\mathcal N(0,I)$, the trajectory reads
$x_t = \sigma_t x_0 + \alpha_t x_1$ where $\sigma_t=\cos(\pi t/2)$ and $\alpha_t=\sin(\pi t/2)$ satisfy the variance-preserving normalization $\sigma_t^2+\alpha_t^2=1$ used in diffusion-type paths \cite{lipman2023flow,ho2020denoising}. The harmonic Lagrangian thus provides a least-action interpretation of the trigonometric VP schedule at the level of marginal probability path; the full diffusion model also specifies a stochastic forward process, which lies outside the current deterministic framework. $\hfill\blacksquare$
\end{example}

\section{Conditional Lagrangian Flow Matching}
\label{sec:conditional_lfm}

The least-action problem \eqref{eq:min_act_prob} provides a principled
mechanism for selecting probability paths and velocity fields, but
solving it directly can be challenging in practice. While the
static--dynamic equivalence (\Cref{thm:connection_pi_v_p}) reduces the
problem to a static optimal transport problem between
$p_{\mathrm{init}}$ and $p_{\mathrm{data}}$, the resulting OT problem
may still be difficult to solve accurately, particularly in
high-dimensional or sample-only regimes. A common strategy in flow
matching is to represent a global probability path as a mixture of
simpler conditional paths
\cite{lipman2023flow,tong2024improving}. We adapt this strategy to
Lagrangian Flow Matching: each conditional path is selected by a
least-action principle, and the Lagrangian is allowed to depend on the
conditioning variable.

Let $z\sim q$ be a conditioning variable, and let $p_0(\cdot\mid z)$
and $p_1(\cdot\mid z)$ be conditional endpoint distributions whose
mixtures recover the desired marginals:
\begin{equation}
\label{eq:conditional_endpoint_mixtures}
\int p_0(\cdot\mid z)\,q(dz) = p_{\mathrm{init}},
\qquad
\int p_1(\cdot\mid z)\,q(dz) = p_{\mathrm{data}}.
\end{equation}
The conditional endpoints need not match $p_{\mathrm{init}}$ and
$p_{\mathrm{data}}$ individually; only their mixtures must. For each
$z$, let $\mathcal L_z(x,v,t)$ be a (possibly $z$-dependent) Lagrangian and let
$
S_z[\gamma] \;=\; \int_0^1
\mathcal L_z\bigl(\gamma(t),\dot\gamma(t),t\bigr)\,dt
$ be the corresponding action. Then, the associated $z$-conditional least-action trajectory and cost are
\begin{equation}
\label{eq:conditional_lagrangian_cost}
\begin{aligned}
\gamma_{x_0,x_1}^z
\;:=\;
\arg\min_{\substack{\gamma\in AC([0,1];\mathbb R^d)\\
\gamma(0)=x_0,\,\gamma(1)=x_1}}
S_z[\gamma],\qquad
c_z(x_0,x_1)
\;:=\;
S_z[\gamma_{x_0,x_1}^z].
\end{aligned}
\end{equation}
For each $z$, the conditional endpoint coupling is obtained by solving
the $z$-conditional static optimal transport problem
\begin{equation}
\label{eq:conditional_static_ot}
\pi_z^*
\;\in\;
\operatorname*{argmin}_{\pi\in\Pi(p_0(\cdot\mid z),p_1(\cdot\mid z))}
\int_{\mathbb R^d\times\mathbb R^d}
c_z(x_0,x_1)\,d\pi(x_0,x_1).
\end{equation}
Pushing $\pi_z^*$ forward along the least-action trajectories yields
the conditional probability path and velocity field
\begin{equation}
\label{eq:conditional_path_velocity}
\!\!\!\!\!\!\!\!\!\! p_t(\cdot\mid z)
\;=\; 
\bigl(\gamma_{x_0,x_1}^z(t)\bigr)_{\#}
\pi_z^*,
\;\;\;
v_t(x\mid z)
\;=\; \mathbb E_{(x_0,x_1)\sim\pi_z^*}
\bigl[\dot\gamma_{x_0,x_1}^z(t)\,\big|\,\gamma_{x_0,x_1}^z(t)=x\bigr].
\end{equation}
Applying \Cref{thm:connection_pi_v_p} to each $z$ independently,
$(p_t(\cdot\mid z), v_t(\cdot\mid z))$ is the optimal solution of the
$z$-conditional dynamic least-action problem
\begin{align}
\label{eq:min_act_prob_cond}
\mathcal A_{\mathrm{cond}}
\;:=\;
\inf_{\{p_t(\cdot\mid z),v_t(\cdot\mid z)\}_z}\;
\mathbb E_{z\sim q}
\int_0^1\!\!\int_{\mathbb R^d}
\mathcal L_z\bigl(x,v_t(x\mid z),t\bigr)\,p_t(x\mid z)\,dx\,dt,
\end{align}
with $p_t(\cdot|z)$ and $v_t(\cdot|z)$ subject to the continuity equation and endpoint constraints.

The least-action trajectories $\gamma_{x_0,x_1}^z$ provide the following
training targets for the conditional Lagrangian flow matching
objective
\begin{equation}
\label{eq:conditional_lfm_objective}
\min_\theta\;
\mathbb E_{z\sim q}\,
\mathbb E_{(x_0,x_1)\sim \pi_z^*}\,
\mathbb E_{t\sim \mathcal U[0,1]}
\bigl\|
v_\theta(\gamma_{x_0,x_1}^z(t),t)
-\dot\gamma_{x_0,x_1}^z(t)
\bigr\|^2,
\end{equation}
which generalizes the unconditional objective \eqref{eq:training_obj}.
The conditioning distribution $q$, conditional endpoints $p_0(\cdot|z)$ and $p_1(\cdot|z)$, and
$z$-dependent Lagrangians $\mathcal L_z$ are design parameters that
can be chosen for the problem at hand, while the simulation-free
property is preserved provided the trajectories $\gamma_{x_0,x_1}^z$
admit closed-form expressions. The next proposition shows that the
conditional formulation is consistent with the unconditional one when
the same convex Lagrangian is used.

\begin{proposition}[Conditional action upper-bounds the unconditional action]
\label{prop:conditional_action_dominates}
Assume $\mathcal L_z=\mathcal L$ for all $z$ and that
$\mathcal L(x,\cdot,t)$ is convex in its velocity argument. Then
$\mathcal A_{\mathrm{cond}}\ge \mathcal A_{\mathrm{dyn}}$, where
$\mathcal A_{\mathrm{dyn}}$ is the unconditional least-action value
of \eqref{eq:min_act_prob}. 
Moreover, 
equality is attained by the trivial conditioning
$q=\delta_{z_0}$ with $p_0(\cdot\mid z_0)=p_{\mathrm{init}}$ and
$p_1(\cdot\mid z_0)=p_{\mathrm{data}}$.
\end{proposition}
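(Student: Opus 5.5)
The plan is to take an arbitrary feasible conditional family $\{(p_t(\cdot\mid z),v_t(\cdot\mid z))\}_z$ and build from it a feasible unconditional pair $(\bar p_t,\bar v_t)\in\mathfrak D$ whose unconditional action is no larger than the conditional action. Taking the infimum over conditional families then gives $\mathcal A_{\mathrm{cond}}\ge\mathcal A_{\mathrm{dyn}}$. The construction is the usual marginalization from conditional flow matching:
\[
\bar p_t(x):=\int p_t(x\mid z)\,q(dz),\qquad
\bar v_t(x):=\frac{\int v_t(x\mid z)\,p_t(x\mid z)\,q(dz)}{\bar p_t(x)}
\]
on $\{\bar p_t>0\}$, with $\bar v_t$ set to $0$ elsewhere.

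\textbf{Feasibility.} Integrate the conditional continuity equations $\partial_t p_t(\cdot\mid z)+\nabla\cdot(p_t(\cdot\mid z)v_t(\cdot\mid z))=0$ against $q(dz)$. By construction $\bar p_t\bar v_t=\int p_t(\cdot\mid z)v_t(\cdot\mid z)\,q(dz)$, so linearity gives $\partial_t\bar p_t+\nabla\cdot(\bar p_t\bar v_t)=0$. This should be read in the distributional sense: test against $\varphi\in C_c^\infty$ and exchange the $z$-integral with the $x,t$-integrals by Fubini. The endpoint conditions follow directly from \eqref{eq:conditional_endpoint_mixtures}, namely $\bar p_0=p_{\mathrm{init}}$ and $\bar p_1=p_{\mathrm{data}}$. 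Hence $(\bar p,\bar v)\in\mathfrak D$.

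\textbf{Action comparison.} Fix $(x,t)$ with $\bar p_t(x)>0$ and define the probability measure $\mu_{x,t}(dz):=p_t(x\mid z)\,q(dz)/\bar p_t(x)$. Then $\bar v_t(x)=\int v_t(x\mid z)\,\mu_{x,t}(dz)$ is a $\mu_{x,t}$-average of conditional velocities. Since $\mathcal L_z=\mathcal L$ is convex in its velocity argument, Jensen's inequality gives
\[
\mathcal L(x,\bar v_t(x),t)\,\bar p_t(x)\;\le\;\int \mathcal L(x,v_t(x\mid z),t)\,p_t(x\mid z)\,q(dz).
\]
Integrating over $x$ and $t$ and applying Tonelli/Fubini, the right side becomes the conditional objective in \eqref{eq:min_act_prob_cond}. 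The left side is the unconditional action of $(\bar p,\bar v)$, which is at least $\mathcal A_{\mathrm{dyn}}$. Points where $\bar p_t(x)=0$ contribute zero to both sides. This step requires $\mathcal L$ to be bounded below, or at least the relevant integrals to be well defined. I will assume this, using $K\ge\alpha\|v\|^2$ together with the standing regularity on $V$. I expect this integrability bookkeeping, together with Jensen for a possibly vector-valued average on a set of positive density, to be the main technical point. For the separable form $K(v)-V(x)$ the $V$-term is actually linear in the density, so the two sides agree on it. Jensen is then needed only for the quadratic $K$, which makes the step clean.

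\textbf{Equality case.} With $q=\delta_{z_0}$, $p_0(\cdot\mid z_0)=p_{\mathrm{init}}$ and $p_1(\cdot\mid z_0)=p_{\mathrm{data}}$, the conditional problem \eqref{eq:min_act_prob_cond} is literally the unconditional problem \eqref{eq:min_act_prob}. Its feasible sets and objectives coincide, so $\mathcal A_{\mathrm{cond}}=\mathcal A_{\mathrm{dyn}}$. The optimizer is the one supplied by \Cref{thm:connection_pi_v_p}.
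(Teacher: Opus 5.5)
Your proposal is correct and follows essentially the same route as the paper's proof: you marginalize the conditional family, check admissibility by integrating the conditional continuity equations and using the endpoint mixture conditions, then apply Jensen's inequality pointwise to the convex $\mathcal L(x,\cdot,t)$ and integrate, and you use the trivial conditioning $q=\delta_{z_0}$ for the equality case. Two of your details are slightly tighter than the paper's: your equality argument (the two problems literally coincide, so their values agree) does not need the paper's assumption that the unconditional problem admits an optimizer, and your remark that the $V$-term is linear in the density handles integrability bookkeeping that the paper leaves implicit.
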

The conditional formulation thus never undershoots the unconditional optimum and matches it in the trivial case. More details are deferred to \Cref{app:_d2}.

\begin{figure}[t]
\centering
\begin{subfigure}[t]{0.195\linewidth}
  \centering
  \includegraphics[width=\linewidth]{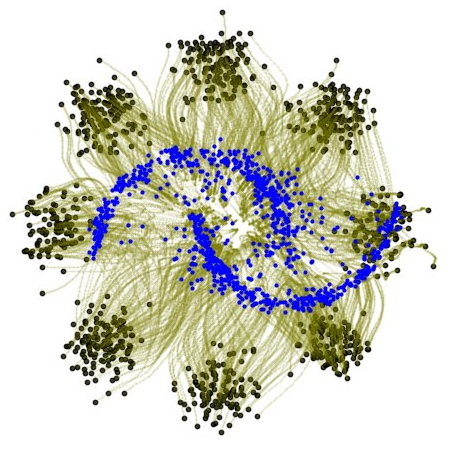}
  \caption{$n=1$}
  \label{fig:ot_batch_sweep_1}
\end{subfigure}\hfill
\begin{subfigure}[t]{0.195\linewidth}
  \centering
  \includegraphics[width=\linewidth]{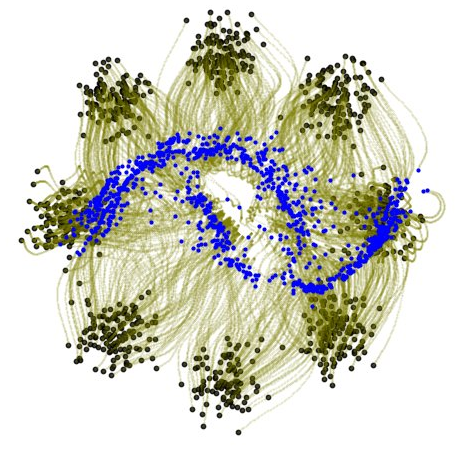}
  \caption{$n=10$}
  \label{fig:ot_batch_sweep_10}
\end{subfigure}\hfill
\begin{subfigure}[t]{0.195\linewidth}
  \centering
  \includegraphics[width=\linewidth]{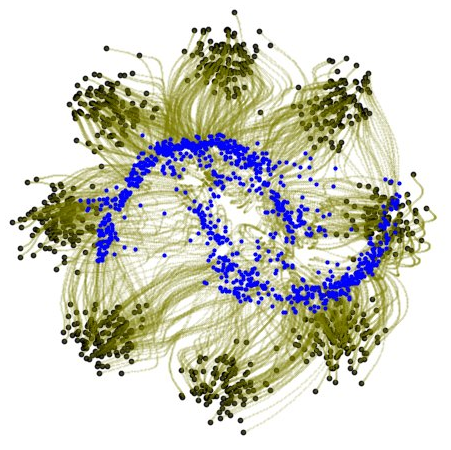}
  \caption{$n=25$}
  \label{fig:ot_batch_sweep_25}
\end{subfigure}\hfill
\begin{subfigure}[t]{0.195\linewidth}
  \centering
  \includegraphics[width=\linewidth]{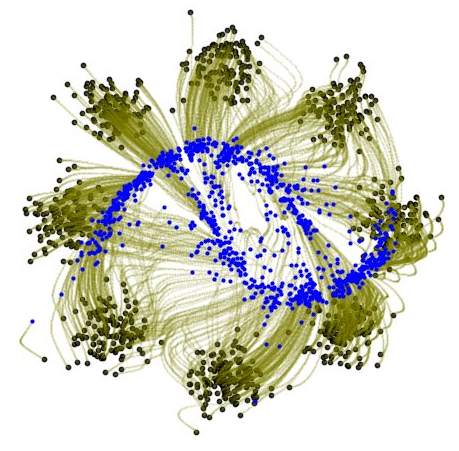}
  \caption{$n = 50$}
  \label{fig:ot_batch_sweep_50}
\end{subfigure}\hfill
\begin{subfigure}[t]{0.195\linewidth}
  \centering
  \includegraphics[width=\linewidth]{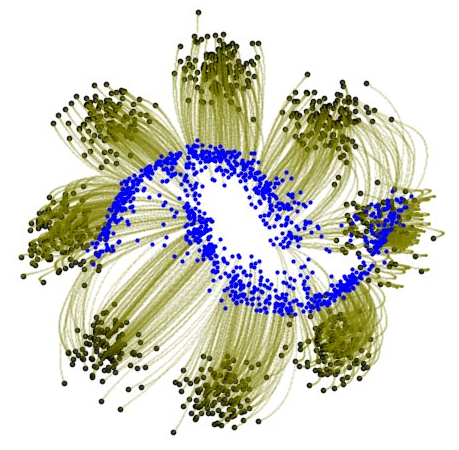}
  \caption{$n=100$}
  \label{fig:ot_batch_sweep_100}
\end{subfigure}
\caption{Mini-batch harmonic flow matching ($\omega = 1$) from an 8-Gaussian-mixture source (olive) to a double-moon target (blue), swept over OT batch size $n \in \{1, 10, 25, 50, 100\}$. Increasing $n$ tightens the empirical coupling toward the OT solution: compare the many-to-many fan-out at $n=1$ to the bundled, mode-to-region routing at $n=100$. Trajectory curvature is set by $\omega$ and is preserved across all panels --- the two axes ($n$ for the coupling, $\mathcal{L}$ for the matched-pair trajectory) act independently.}
\label{fig:ot_batch_sweep}
\end{figure}

\subsection{Mini-batch Conditional Lagrangian Flow Matching}
\label{sec:minibatch_lfm}

A natural and useful instance of the conditional formulation
is \emph{mini-batch} conditional Lagrangian flow matching, in which
the conditioning variable is a pair of finite batches drawn
independently from $p_{\mathrm{init}}$ and $p_{\mathrm{data}}$. This
generalizes mini-batch optimal-transport flow matching
\cite{tong2024improving,pooladian2023multisample} from the kinetic
Lagrangian to general Lagrangians, and is well-suited to the
sample-only regime in which $p_{\mathrm{init}}$ and $p_{\mathrm{data}}$
are accessed only through samples.

Fix a batch size $n\ge 1$, and take the conditioning variable to be a
pair of independent batches
$z = (x_0^{(1:n)},\,x_1^{(1:n)})$ drawn from
$p_{\mathrm{init}}^{\otimes n}\otimes p_{\mathrm{data}}^{\otimes n}$,
with conditional endpoint distributions given by the corresponding
empirical distributions
$p_0(\cdot\mid z) = \tfrac1n\sum_{i=1}^n \delta_{x_0^{(i)}}$ and
$p_1(\cdot\mid z) = \tfrac1n\sum_{i=1}^n \delta_{x_1^{(i)}}$. The
mixture conditions \eqref{eq:conditional_endpoint_mixtures} hold by
construction. We use a single, $z$-independent Lagrangian
$\mathcal L$ of the form $K(v)-V(x)$, with associated trajectory
$\gamma_{x_0,x_1}$ and cost $c_{\mathcal L}(x_0,x_1)$ as in
\eqref{eq:cost_L}. With these choices, the conditional static
problem \eqref{eq:conditional_static_ot} reduces to a discrete
optimal transport problem between the two empirical batches:
\begin{equation}
\label{eq:minibatch_discrete_ot}
\pi_z^*
\;\in\;
\operatorname*{argmin}_{\pi\in\Pi(p_0(\cdot\mid z),p_1(\cdot\mid z))}
\sum_{i,j}\pi_{ij}\,c_{\mathcal L}(x_0^{(i)},x_1^{(j)}),
\end{equation}
which is a small linear assignment problem solvable by standard
methods (e.g., the Hungarian algorithm or Sinkhorn iterations) at
every gradient step. Substituting the matched pairs into \eqref{eq:conditional_lfm_objective}
and writing the conditional expectation explicitly over batch indices
gives the mini-batch conditional Lagrangian flow matching objective
\begin{equation}
\label{eq:minibatch_lfm_objective}
\min_\theta\;
\mathbb E_{z\sim p_{\mathrm{init}}^{\otimes n}\otimes p_{\mathrm{data}}^{\otimes n}}\,
\mathbb E_{(i,j)\sim \pi_z^*}\,
\mathbb E_{t\sim \mathcal U[0,1]}
\bigl\|
v_\theta\bigl(\gamma_{x_0^{(i)},x_1^{(j)}}(t),t\bigr)
-\dot\gamma_{x_0^{(i)},x_1^{(j)}}(t)
\bigr\|^2,
\end{equation}
where $\pi_z^*$ from \eqref{eq:minibatch_discrete_ot} is viewed as a
distribution over index pairs $(i,j)\in\{1,\dots,n\}^2$. Each gradient step samples a batch $z$, solves \eqref{eq:minibatch_discrete_ot}, and uses $\gamma_{x_0,x_1}$ as the velocity target along matched pairs. As $n\to\infty$, the empirical marginals converge to $p_{\mathrm{init}}$ and $p_{\mathrm{data}}$, and $\pi_z^*$ converges to the global optimal coupling $\pi^*$ under mild regularity. For finite $n$ the coupling is approximate, but the objective remains simulation-free for any Lagrangian admitting closed-form least-action trajectories.\footnote{
In practice, the OT batch size and the training batch size need not be the same.
For instance, when the OT batch size \(n\) is too small, one may aggregate the outputs of several OT batches to form a larger training batch.}

\Cref{fig:ot_batch_sweep} illustrates how the batch size \(n\)
affects the trajectories in mini-batch harmonic flow matching. As \(n\)
increases, the endpoint pairings gradually transition from independent
pairings to OT pairing. Consequently,
the resulting training trajectories increasingly reflect the global OT geometry. In the limiting case \(\omega\to 0\), it recovers standard OT flow matching as
\(n\to\infty\), and recovers rectified or affine conditional flow matching
when \(n=1\). Cases with \(\omega\neq 0\) can therefore be viewed as
natural extensions of these existing frameworks. See \Cref{app:harmonic_flow_matching} for a detailed discussion with examples.

\begin{table}[t]
\centering
\caption{Comparison of harmonic flow matching across four distribution
pairs ($\mu \pm \sigma$ over five seeds) in terms of distribution fit
(2-Wasserstein), deviation from the harmonic OT path at
$\omega = 1$ (normalized path energy), and training time. Lower is
better for both metrics; best is in \textbf{bold}.}
\label{tab:harmonic_comparison}
\setlength{\tabcolsep}{4pt}
\renewcommand{\arraystretch}{1.15}
\resizebox{\textwidth}{!}{%
\begin{tabular}{l cc cc cc cc c}
\toprule
Dataset $\rightarrow$
 & \multicolumn{2}{c}{$\mathcal{N}\!\rightarrow\!\text{8gaussians}$}
 & \multicolumn{2}{c}{moons$\leftrightarrow$8gaussians}
 & \multicolumn{2}{c}{$\mathcal{N}\!\rightarrow\!\text{moons}$}
 & \multicolumn{2}{c}{$\mathcal{N}\!\rightarrow\!\text{scurve}$}
 & Avg.\ train time \\
\cmidrule(lr){2-3}\cmidrule(lr){4-5}\cmidrule(lr){6-7}\cmidrule(lr){8-9}\cmidrule(lr){10-10}
Algorithm $\downarrow$ Metric $\rightarrow$
 & $W_2$ & $\text{NPE}_{\omega=1}$
 & $W_2$ & $\text{NPE}_{\omega=1}$
 & $W_2$ & $\text{NPE}_{\omega=1}$
 & $W_2$ & $\text{NPE}_{\omega=1}$
 & ($\times 10^3$ s) \\
\midrule
OT-CFM
 & $0.505_{\pm 0.118}$ & $0.087_{\pm 0.037}$
 & $0.336_{\pm 0.113}$ & $0.187_{\pm 0.056}$
 & $0.197_{\pm 0.026}$ & $0.138_{\pm 0.047}$
 & $0.976_{\pm 0.198}$ & $0.051_{\pm 0.041}$
 & $\mathbf{0.827}_{\pm 0.037}$ \\
OT-SI
 & $0.521_{\pm 0.092}$ & $0.497_{\pm 0.042}$
 & $0.401_{\pm 0.154}$ & $1.308_{\pm 0.148}$
 & $0.227_{\pm 0.036}$ & $0.785_{\pm 0.117}$
 & $1.126_{\pm 0.183}$ & $0.325_{\pm 0.041}$
 & $0.888_{\pm 0.095}$ \\
\midrule
OT-Harmonic, $\omega=0.001$
 & $0.504_{\pm 0.119}$ & $0.087_{\pm 0.037}$
 & $\mathbf{0.335}_{\pm 0.113}$ & $0.186_{\pm 0.058}$
 & $0.197_{\pm 0.024}$ & $0.136_{\pm 0.047}$
 & $\mathbf{0.975}_{\pm 0.199}$ & $0.051_{\pm 0.041}$
 & $0.898_{\pm 0.080}$ \\
 OT-Harmonic, $\omega=1$
 & $\mathbf{0.493}_{\pm 0.117}$ & $\mathbf{0.025}_{\pm 0.030}$
 & $0.3725_{\pm 0.181}$ & $\mathbf{0.046}_{\pm 0.035}$
 & $\mathbf{0.188}_{\pm 0.016}$ & $\mathbf{0.036}_{\pm 0.029}$
 & $1.023_{\pm 0.241}$ & $\mathbf{0.037}_{\pm 0.008}$
 & $0.845_{\pm 0.038}$ \\
 OT-Harmonic, $\omega=\pi/2$
 & $0.520_{\pm 0.092}$ & $0.497_{\pm 0.045}$
 & $0.402_{\pm 0.155}$ & $1.308_{\pm 0.149}$
 & $0.228_{\pm 0.037}$ & $0.785_{\pm 0.118}$
 & $1.127_{\pm 0.186}$ & $0.325_{\pm 0.041}$
 & $0.907_{\pm 0.115}$ \\
\bottomrule
\end{tabular}%
}
\end{table}

\section{Experiments}
\label{sec:experiments}
We evaluate Lagrangian flow matching on three settings of increasing
dimensionality: synthetic two-dimensional data, single-cell
trajectory interpolation, and CIFAR-10 image generation. We
instantiate three OT-Harmonic models at frequencies
$\omega \in \{0.001,\, 1,\, \pi/2\}$ and compare against OT-CFM
\citep{tong2024improving} and OT-SI \citep{albergo2023building},
with an additional anisotropic variant (OT-Aniso, see \Cref{app:anisotropic_harmonic} for details) on CIFAR-10. Baselines are retrained under our own procedure so that all methods share the same architecture, optimizer, training budget, and evaluation protocol for controlled comparison.

Three findings emerge. First, the OT-Harmonic family interpolates between
the two baselines along $\omega$: as $\omega \to 0$, harmonic
geodesics degenerate to straight lines and recovers OT-CFM, while at $\omega = \pi/2$ the interpolant reduces to the cosine schedule underlying OT-SI. Second, $\omega = 0.001$ slightly improves on OT-CFM on Multiome and on CIFAR-10, suggesting that minor deviations from straight-line interpolation can be beneficial even in the limit where the harmonic flow approaches OT-CFM theoretically. Third, $\omega = 1$ offers the largest gains in the 2D setting, improving on both baselines in $W_2$ on three of four distribution pairs; in higher-dimensional settings it remains competitive but no longer dominates. We defer the experimental setup and additional simulation results to \Cref{app:exp_setup}.

\textbf{Two-dimensional data.} To assess how closely a learned flow recovers the harmonic optimal transport solution, we report the
normalized path energy $\mathrm{NPE}_{\omega}[\phi] = \left|\, K[\phi] / C_{\omega}(\pi_0,\pi_1) - 1 \,\right|$, the relative deviation of the flow's kinetic energy $K[\phi]$ from the harmonic-OT reference cost $C_\omega(\pi_0,\pi_1)$ at frequency $\omega$ (closed-form derivation in \Cref{app:npe-derivation}). $\mathrm{NPE}_\omega = 0$ iff $\phi$ is the harmonic-OT flow at $\omega$.

\begin{wrapfigure}[10]{r}{0.7\linewidth}
\vspace{-\baselineskip}
\centering
\begin{subfigure}[t]{0.333\linewidth}
  \centering
  \includegraphics[width=\linewidth]{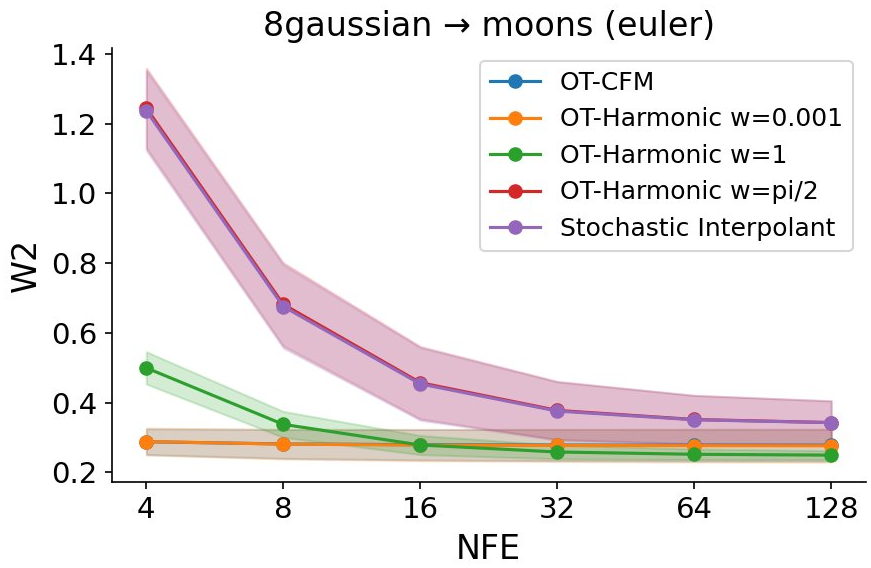}
\end{subfigure}\hfill
\begin{subfigure}[t]{0.333\linewidth}
  \centering
  \includegraphics[width=\linewidth]{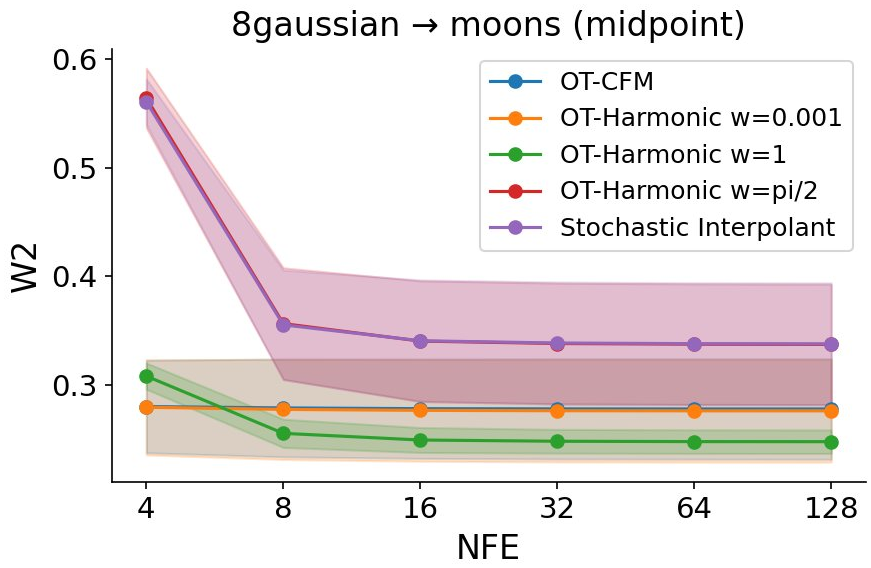}
\end{subfigure}\hfill
\begin{subfigure}[t]{0.333\linewidth}
  \centering
  \includegraphics[width=\linewidth]{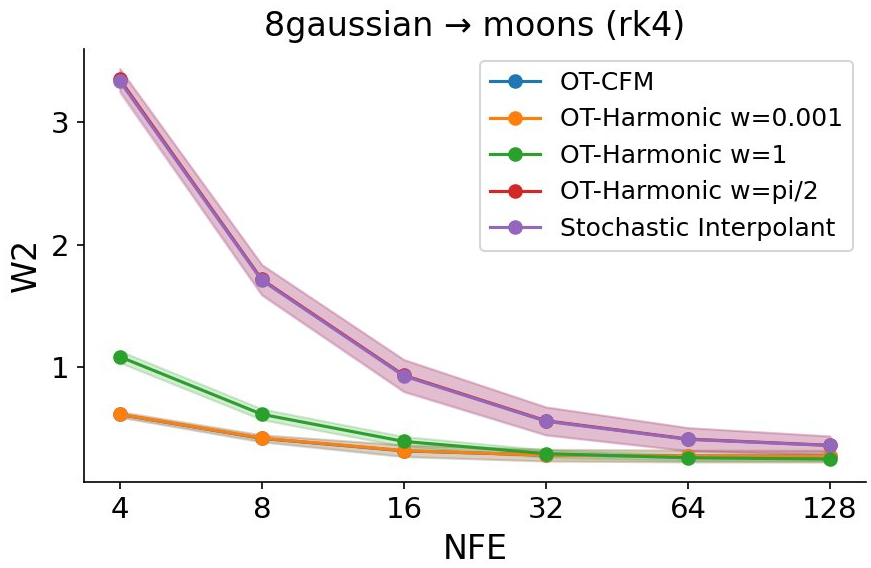}
\end{subfigure}
\caption{Sample quality vs.\ inference budget on
$\mathcal{N}\!\to\!\text{moons}$ across three ODE solvers (Euler,
midpoint, RK4). OT-Harmonic reduce the 2-Wasserstein distance for a fixed NFE during inference.}
\label{fig:solver_small}
\end{wrapfigure}
\Cref{tab:harmonic_comparison} summarizes the results. At $\omega = 1$
the learned flow tracks the harmonic-OT reference path closely
(low $\mathrm{NPE}_{\omega=1}$) while improving on OT-CFM in $W_2$ on
$\mathcal{N}\!\to\!\text{8gaussians}$ and $\mathcal{N}\!\to\!\text{moons}$.
At $\omega = 0.001$ OT-Harmonic reproduces OT-CFM to within seed
variation, confirming the linear-interpolant limit predicted by the
$\omega \to 0$ analysis. We further examine the inference-time sample
quality in \Cref{fig:solver_small} (also \Cref{fig:solver} in \Cref{app:exp_setup}): OT-Harmonic requires more function
evaluations as $\omega$ grows, consistent with $\omega$ controlling the path curvature. 

\begin{wraptable}[10]{r}{0.62\textwidth}
\vspace{-1\baselineskip}
\centering
\caption{Single-cell trajectory interpolation across datasets,
averaged over leave-one-out held-out timepoints. 
Bold indicates methods within the standard deviation of the lowest mean.}
\label{tab:single_cell}
\setlength{\tabcolsep}{4pt}
\renewcommand{\arraystretch}{1.15}
\resizebox{\linewidth}{!}{%
\begin{tabular}{l ccc}
\toprule
Algorithm $\downarrow$ Dataset $\rightarrow$ & Cite & EB & Multi \\
\midrule
OT-CFM
 & $\mathbf{0.8991 \pm 0.0340}$ & $\mathbf{0.9519 \pm 0.0405}$ & $\mathbf{1.0721 \pm 0.0587}$ \\
OT-SI
 & $1.2424 \pm 0.0343$ & $1.3378 \pm 0.1381$ & $1.4856 \pm 0.0698$ \\
\midrule
OT-Harmonic, $\omega = 0.001$
 & $\mathbf{0.8985 \pm 0.0374}$ & $\mathbf{0.9523 \pm 0.0404}$ & $\mathbf{1.0708 \pm 0.0574}$ \\
OT-Harmonic, $\omega = 1$
 & $0.9654 \pm 0.0298$ & $0.9909 \pm 0.0567$ & $1.1562 \pm 0.0765$ \\
OT-Harmonic, $\omega = \pi/2$
 & $1.2373 \pm 0.0226$ & $1.3352 \pm 0.1334$ & $1.4938 \pm 0.0742$ \\
\bottomrule
\end{tabular}
}
\end{wraptable}
\textbf{Single-cell interpolation.} We next evaluate on single-cell
trajectory interpolation, following the leave-one-out protocol of
\citep{tong2024improving}: the model interpolates the cell
distribution at a held-out timepoint $t$ given snapshots at
$[0, t{-}1]$ and $[t{+}1, T]$. \Cref{tab:single_cell} reports the
$1$-Wasserstein distance to the held-out distribution on three
datasets: embryoid body \citep{moon2019visualizing}, CITE-seq, and
Multiome \citep{burkhardt2022multimodal}. OT-CFM and OT-Harmonic at
$\omega = 0.001$ are the two strongest methods and remain within
$1.5 \times 10^{-3}$ on all three datasets, consistent with the
$\omega \to 0$ limit in which the harmonic flow recovers OT-CFM.
Increasing $\omega$ degrades performance monotonically: $\omega = 1$
falls behind both leaders by $4$--$8\%$ on every dataset, and
$\omega = \pi/2$ collapses to the OT-SI regime.

\begin{wraptable}[25]{r}{0.65\textwidth}
\vspace{-1\baselineskip}
  \caption{FID score and number of function evaluations (NFE) for different ODE solvers: fixed-step Euler integration with 100 and 1000 steps and adaptive integration (\textsc{dopri5}). We have run OT-CFM, OT-SI following our training procedure. The four last rows report the results of our proposed methods OT-Harmonic and its anisotropic variant (OT-Aniso).}
  \label{tab:fid_nfe}
  \centering
  \small
    \begin{tabular}{l cc cc}
    \toprule
    NFE / sample $\rightarrow$ & 100   & 1000  & \multicolumn{2}{c}{Adaptive} \\
    \cmidrule(lr){2-2}\cmidrule(lr){3-3}\cmidrule(lr){4-5}
    Algorithm $\downarrow$     & FID   & FID   & FID   & NFE \\
    \midrule
    OT-CFM                          & 4.730 & 3.901 & 3.706 & $\mathbf{133.12}$ \\
    OT-SI                           & $\mathbf{4.474}$ & 4.077 & 4.003 & 163.84 \\
    \midrule
    OT-Harmonic, $\omega = 0.001$   & 4.656 & $\mathbf{3.849}$ & $\mathbf{3.681}$ & $\mathbf{133.12}$ \\
    OT-Harmonic, $\omega = 1$       & 4.741 & 3.963 & 3.890 & 143.36 \\
    OT-Harmonic, $\omega = \pi/2$   & 4.584 & 4.184 & 4.116 & 163.85 \\
    OT-Aniso                        & 4.821 & 4.168 & 4.078 & 153.60 \\
    \bottomrule
  \end{tabular}
\captionsetup{name=Figure}
\centering
\begin{subfigure}[t]{0.5\linewidth}
  \centering
  \includegraphics[width=\linewidth]{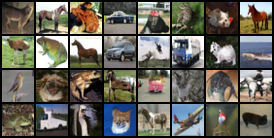}
\end{subfigure}\hfill
\begin{subfigure}[t]{0.5\linewidth}
  \centering
  \includegraphics[width=\linewidth]{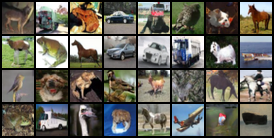}
\end{subfigure}\hfill
\renewcommand{\thetable}{\thefigure}
\refstepcounter{figure}
\addtocounter{table}{-1}
\label{fig:cifar10_small}
\vspace{-0.8\baselineskip}
\caption{CIFAR-10 images generated by OT-CFM (left) and OT-Harmonic (right) with frequency $\omega = 0.001$. }
\end{wraptable}
\textbf{CIFAR-10.}
\Cref{tab:fid_nfe} reports FID and NFE on CIFAR-10, and
\Cref{fig:cifar10_small} shows uncurated samples from OT-CFM and
OT-Harmonic at $\omega = 0.001$; additional sample grids are in
\Cref{fig:cifar10}. Under adaptive integration, FID rises
monotonically with $\omega$ across the harmonic family, recovering
OT-CFM in the small-$\omega$ limit and approaching OT-SI as
$\omega \to \pi/2$. At $\omega = 0.001$, OT-Harmonic reaches
$\mathrm{FID} = 3.681$, slightly improving on our OT-CFM baseline
($\mathrm{FID} = 3.706$) at the same adaptive NFE budget; the two
sample grids in \Cref{fig:cifar10_small} are visually
indistinguishable, consistent with the $\omega \to 0$ correspondence
between the two methods. OT-Harmonic at $\omega = 0.001$ also attains
the lowest FID at $1000$-step Euler integration ($3.849$ vs.\
$3.901$), while at the low-budget $100$-step setting OT-SI attains
the lowest FID ($4.474$); we read this as an inference-time effect
of the cosine schedule rather than a training-time advantage, since
OT-SI trails all harmonic and OT-CFM variants once enough
integration steps are available. Our OT-CFM number is above the
$\mathrm{FID} = 3.577$ reported in \citep{tong2024improving},
reflecting our shorter $400{,}000$-step training budget, which we
apply uniformly across all variants for a controlled comparison. The
anisotropic variant OT-Aniso achieves $\mathrm{FID} = 4.078$, no
better than OT-SI: when CIFAR-10 pixel coordinates are approximately
independent, the rotation recovered from a small fit batch carries
little geometric signal beyond an isotropic schedule.
\section{Conclusion}
\label{sec:conclusion}

We have proposed Lagrangian flow matching, a physics-based framework
that selects flow matching probability paths and velocity fields by
minimizing a Lagrangian action subject to the continuity equation
and prescribed endpoints. The static--dynamic equivalence of
Lagrangian optimal transport reduces this problem to a static OT
problem with a closed-form transport cost, yielding a family of
simulation-free training objectives that recover OT-based and
conditional flow matching as the kinetic special case and admit new
constructions, such as harmonic flows, in the non-kinetic case.
Several directions remain open. The choice of Lagrangian provides
an inductive bias on the learned dynamics whose interaction with
data geometry is not yet well understood; in particular, anisotropic
or learned potentials $V$ may align trajectories with low-dimensional
data structure. Extending the framework to stochastic dynamics, in
the spirit of Schr\"{o}dinger bridges, is a natural next step.

\section*{Acknowledgments}
This work was supported in part by the AWS Build on Trainium program and the Theta EdgeCloud computing resources. The authors gratefully acknowledge these providers for the computing infrastructure that enabled the experiments in this paper.

\bibliographystyle{abbrv}
\bibliography{ref}

\appendix
\crefalias{section}{appendix} 

\section{Related Work}
\label{app:related_work}

Lagrangian flow matching sits at the intersection of two lines of work:
flow- and diffusion-based generative models built on prescribed probability
paths, and variational or action-based approaches to learning transport
dynamics. We discuss connections to representative works in each line.

\subsection{Flow- and Diffusion-based Generative Models}
\label{app:related_work_flow}

\textbf{Continuous Normalizing Flows and Neural ODEs
\citep{chen2018neural,grathwohl2018scalable}.}
Continuous normalizing flows model the data distribution as the pushforward
of a simple base distribution under the flow of a neural ODE
$\dot{x} = v_\theta(x, t)$
\citep{chen2018neural,grathwohl2018scalable}. The induced density evolves
according to the continuity equation, and likelihood-based training
maximizes the change-of-variables log-likelihood of observed data, which
requires solving the ODE forward and the adjoint backward at every gradient
step. This formulation is expressive enough to represent a broad class of
continuous probability paths \citep{song2021maximum}, but the repeated ODE
solves make training computationally expensive and difficult to scale.
Flow matching, including our framework, retains the same deterministic ODE
model at inference time but replaces likelihood-based training with a
regression objective against a prescribed velocity field, which removes
ODE solves from the training loop. Our contribution lies upstream of this
regression step: rather than prescribing the probability path and velocity
field by hand, we select them as the least-action solution of a Lagrangian,
providing a principled mechanism for shaping the ODE trajectories that a
continuous normalizing flow would otherwise be asked to learn directly
from likelihoods.

\textbf{Diffusion Models
\citep{sohl2015deep,ho2020denoising,song2021scorebased}.}
Diffusion models specify a forward stochastic process that progressively
corrupts data into a tractable reference distribution, and learn the
corresponding reverse process by score matching or denoising regression
\citep{sohl2015deep,ho2020denoising}. The continuous-time formulation of
\citep{song2021scorebased} unifies these constructions through a stochastic
differential equation, with sampling carried out by a reverse-time SDE or
its associated probability-flow ODE. The probability path is implicitly
prescribed by the forward noising schedule, and standard choices such as
the variance-preserving and variance-exploding processes induce specific
families of marginal paths between data and noise. Our framework relates
to the deterministic side of this picture: the trigonometric path of
\cref{ex:harmonic_to_trig} matches the marginal of the variance-preserving
diffusion path with a cosine schedule
\citep{lipman2023flow,ho2020denoising}, and arises in our framework as the
harmonic-oscillator least-action trajectory at $\omega = \pi/2$. This
provides a least-action interpretation of the variance-preserving schedule
at the level of the marginal probability path, although the full diffusion
model additionally specifies a stochastic forward process that lies outside
the deterministic continuity-equation framework developed here. Stochastic
extensions, in the spirit of Schr\"odinger bridges
(\cref{app:schrodinger}), are a natural direction for connecting the two
frameworks more closely.

\textbf{Conditional, Rectified, and OT-based Flow Matching
\citep{lipman2023flow,liu2023flow,tong2024improving}.}
Flow matching trains a neural velocity field by regression against a target
velocity associated with a prescribed probability path between
$p_{\mathrm{init}}$ and $p_{\mathrm{data}}$, avoiding the expensive ODE
solves required by likelihood-based training of continuous normalizing flows
\citep{chen2018neural,grathwohl2018scalable}. The framework is determined
by the choice of probability path and its associated velocity field.
\citep{lipman2023flow} introduce conditional flow matching with affine
Gaussian paths, in which conditioning on a target sample yields a
closed-form linear interpolation and a tractable per-sample regression
objective. \citep{liu2023flow} develop the rectified flow construction,
which interpolates linearly between independently sampled endpoints and
iteratively straightens the resulting trajectories. OT-CFM
\citep{tong2024improving} replaces the independent endpoint coupling by a
mini-batch optimal transport coupling under squared-Euclidean cost,
producing straight-line trajectories along the matched pairs and improving
sample efficiency at inference. These constructions share a common
geometric structure: endpoints are matched by a coupling and connected by
straight-line trajectories. Our framework extends this design space by
replacing the straight-line trajectory with the least-action trajectory of
a general Lagrangian, while preserving the simulation-free regression
structure. The kinetic Lagrangian of \cref{ex:harmonic_to_ot} recovers
OT-CFM as a special case, and the affine construction of
\citep{lipman2023flow} arises as its conditional analogue with $z = x_1$
and $p_{\mathrm{init}} = \mathcal{N}(0, I)$. Non-kinetic Lagrangians yield
curved least-action trajectories outside the affine and OT-CFM design
space, with the harmonic family of \cref{ex:harmonic_flow} as a
concrete instance.

\textbf{Stochastic Interpolants
\citep{albergo2023building,albergo2025stochastic}.}
Stochastic interpolants construct a probability path by prescribing a
pointwise interpolation $x_t = I(t, x_0, x_1) + \gamma(t)\, z$ between
coupled endpoints $(x_0, x_1)$, with an optional latent noise $z$ governed
by a schedule $\gamma(t)$. The interpolant $I$ and schedule $\gamma$ are
chosen by hand, typically as trigonometric or affine functions, and the
velocity field is then defined as the conditional expectation of
$\partial_t x_t$ given $x_t$. This framework unifies flow- and
diffusion-based generative models under a common interpolation viewpoint
and accommodates several specific path constructions, including affine and
trigonometric variance-preserving interpolants, by direct specification of
$I$ and $\gamma$. Our framework differs in how the interpolant itself is selected. Rather than prescribing
$I$ directly, we obtain it as the least-action trajectory of a Lagrangian
$\mathcal{L}(x, \dot{x}, t)$, with the endpoint coupling determined by the
induced optimal transport problem. The trigonometric interpolant of \cref{ex:harmonic_to_trig} appears in the
stochastic-interpolant framework as a specific hand-picked choice of $I$
and $\gamma$; in our framework it instead naturally arises as the harmonic-oscillator
least-action trajectory at $\omega = \pi/2$, embedded in a continuous
one-parameter family $\{\gamma^{\omega}_{x_0,x_1}\}_{\omega \in (0, \pi)}$
that smoothly deforms the free-particle straight line into curved
trajectories. More general Lagrangians produce interpolants outside the
stochastic-interpolant design space, including paths whose curvature and
endpoint coupling jointly reflect a prescribed potential. In this sense, the two frameworks are complementary: stochastic
interpolants offer flexibility through the direct specification of $I$ and
$\gamma$ together with stochastic forward processes, whereas Lagrangian
flow matching provides a principled mechanism for selecting deterministic
interpolants from a physical least-action principle, with stochastic
extensions left to future work.

\subsection{Action-based Methods for Learning Transport Dynamics}
\label{app:related_work_action}

Several recent works have explored connections between optimal transport,
least-action principles, and learning-based models for transport dynamics
\citep{neklyudov2023action,neklyudov2024a,pooladian2024neural,koshizuka2022neural}.
These works share the use of variational or action-based principles to
learn or infer transport dynamics. Our work follows this broad direction
but preserves the standard flow-matching learning structure: once the
training targets are constructed, the neural network simply learns to
approximate the velocity field. The distinguishing feature is that the
prescribed interpolation used in standard flow matching is replaced by one
induced from a Lagrangian: the Lagrangian defines endpoint costs, the
induced optimal transport problem selects the endpoint pairing, and the
least-action principle determines the trajectory between paired samples.
We discuss the connections to each of the four works above in turn.

\textbf{Action Matching \citep{neklyudov2023action}.}
Action Matching learns continuous-time dynamics from samples of temporal
marginal distributions $\{p_{t_k}\}_{k=1}^{K}$ along an absolutely continuous
probability path. The method parameterizes a scalar action potential
$s_\theta(x, t)$ and identifies its spatial gradient $\nabla s_\theta$ with
the velocity field that transports the prescribed marginals through the
continuity equation $\partial_t p_t + \nabla \cdot (p_t \nabla s_\theta) = 0$,
fitting $\theta$ through a variational objective on the potential. Two
features distinguish this setup from ours. First, the probability path is
prescribed: Action Matching observes intermediate marginals and recovers a
velocity field consistent with them, whereas in Lagrangian flow matching
only the endpoints $p_{\mathrm{init}}$ and $p_{\mathrm{data}}$ are given
and the Lagrangian determines the entire path. Second, our least-action
trajectories admit closed-form expressions in $(x_0, x_1, t)$ and serve as
explicit velocity targets in a standard flow-matching regression, in
contrast to a variational objective on a scalar potential.

\textbf{Wasserstein Lagrangian Flows \citep{neklyudov2024a}.}
Wasserstein Lagrangian Flows formulate action minimization directly on the
space of probability densities: a density path $\{p_t\}_{t \in [0, 1]}$ is
selected by minimizing a Lagrangian action functional
$\int_0^1 \mathcal{L}(p_t, \dot p_t, t)\, dt$ with marginal constraints at
observed time points. Appropriate choices of kinetic and potential terms
recover a wide range of transport problems as special cases, including
standard optimal transport, Schr\"odinger bridges, unbalanced transport,
and physically constrained transport. The corresponding computational
problem is an infinite-dimensional optimization over density paths, solved
through a Hamiltonian dual that introduces a co-state variable and reduces
to a min--max saddle-point problem. Our framework is less general at the
density-space level: we work with a separable classical Lagrangian
$\mathcal{L}(x, v, t) = K(v) - V(x)$ at the sample-trajectory level. In
exchange, the static--dynamic equivalence (\cref{thm:connection_pi_v_p})
converts the action into a closed-form endpoint cost and explicit
least-action trajectories between matched endpoints, so training reduces
to the usual flow-matching regression form without min--max optimization
or density-space dual variables.

\textbf{Neural Optimal Transport with Lagrangian Costs
\citep{pooladian2024neural}.}
Neural Optimal Transport with Lagrangian Costs studies the computational
problem of optimal transport when the ground cost is itself defined by a
least-action principle:
$c_\mathcal{L}(x_0, x_1) = \inf_\gamma \int_0^1 \mathcal{L}(\gamma(t), \dot\gamma(t), t)\, dt$
subject to $\gamma(0) = x_0$ and $\gamma(1) = x_1$, allowing the resulting
transport map and least-action paths to reflect obstacles, non-Euclidean
geometries, or other prior structure. The method parameterizes both the
transport map and the least-action paths as neural networks and trains
them jointly. This overlaps with the static component of our framework:
the cost $c_\mathcal{L}$ in our \cref{eq:cost_L} coincides with theirs in
the time-independent case. The focus is different, however. Their work
develops methods for \emph{computing} Lagrangian optimal transport maps as
the end product, with paths reported as a geometric byproduct. We instead
use the optimal coupling and least-action trajectories as an
\emph{intermediate step} in constructing conditional probability paths and
velocity targets for flow matching, with the simulation-free training
objective in \cref{eq:conditional_lfm_objective} as the end product. The
static--dynamic equivalence (\cref{thm:connection_pi_v_p}) lets us bypass
the need for an explicit transport map altogether.

\textbf{Neural Lagrangian Schr\"odinger Bridges
\citep{koshizuka2022neural}.}
Neural Lagrangian Schr\"odinger Bridges also use Lagrangian ideas to learn
transport dynamics, but model stochastic rather than deterministic dynamics
through a neural SDE $dX_t = b_\theta(X_t, t)\, dt + \sigma\, dW_t$ whose
drift encodes a Lagrangian action with kinetic and potential terms.
Training is formulated as a Schr\"odinger-bridge problem with marginal
constraints at observed time points: parameters are fit by simulating the
learned SDE and matching the resulting marginals to empirical samples,
which requires forward integration of the SDE inside the training loop and
is therefore not simulation-free in the flow-matching sense. Our framework
differs along three axes: the underlying dynamics are deterministic and
governed by the continuity equation, with a stochastic extension along the
lines of \cref{app:schrodinger} left to future work; the static--dynamic
equivalence (\cref{thm:connection_pi_v_p}) converts the Lagrangian action
into a closed-form per-pair training target; and training reduces to a
standard simulation-free regression objective on matched endpoint pairs,
preserving the scalability that motivates flow matching.

\section{Connection to Energy-based Methods}
\label{app:ebm}

We briefly discuss the relation between Lagrangian flow matching and
energy-based methods. Classical energy-based models (EBMs) assign a scalar
energy $E_\theta(x)$ to each state and define a distribution by favoring
low-energy configurations \cite{lecun2006tutorial,du2019implicit}; more
recent work \cite{balcerak2025energy} unifies energy-based modeling and
flow matching by learning a single scalar field that plays both energy and
drift roles.

Our connection to energy-based modeling is structural rather than
methodological. EBMs assign a preference to individual states; Lagrangians
assign a preference to entire trajectories,
\begin{equation*}
S[\gamma] \;=\; \int_0^1 \mathcal{L}\bigl(\gamma(t),\dot\gamma(t),t\bigr)\,dt,
\end{equation*}
lifting the same scalar-function-as-preference idea from the state space
$\mathbb{R}^d$ to the path space $\mathrm{AC}([0,1];\mathbb{R}^d)$. In the
separable case $\mathcal{L}(x,v,t) = K(v) - V(x)$, the kinetic term $K(v)$
penalizes rapid motion and the potential term $V(x)$ shapes which regions
of state space are preferred along the path. From this viewpoint, the
static--dynamic equivalence (Thm.~\ref{thm:connection_pi_v_p}) is the bridge
that pushes a path-level preference down to a state-level transport problem:
the optimal coupling and least-action trajectories between matched endpoints
are jointly selected by $\mathcal{L}$.

This perspective suggests a natural extension of our framework: rather than
prescribing the potential $V$, one could parameterize and learn it from data
through a family of Lagrangians
\begin{equation*}
\mathcal{L}_\theta(x,v,t) \;=\; K(v) - V_\theta(x,t),
\end{equation*}
where $V_\theta$ is a learnable potential. The induced least-action problem
would then adapt the path-space action itself within the flow-matching
framework, combining the simulation-free training of flow matching with the
energy-learning viewpoint of EBMs. We leave a full treatment to future work.

\section{Connection to Schr\"odinger Bridges}
\label{app:schrodinger}

The relation between Schr\"odinger bridges and optimal transport has been
extensively explored
\cite{leonard2013survey,chen2016relation,de2021diffusion,tong2024improving}.
In particular, Schr\"odinger bridges can be viewed as a stochastic,
entropy-regularized counterpart of optimal transport: in the Brownian
reference case the dynamic Schr\"odinger problem involves stochastic
transport with diffusion, whereas the optimal transport limit is governed
by deterministic transport dynamics.

Given endpoint distributions $p_0$ and $p_1$, the Schr\"odinger bridge
problem seeks a stochastic process whose initial and terminal marginals
match these endpoints while remaining close, in relative entropy, to a
reference process. If $P^{\mathrm{ref}}$ denotes the path measure of a
reference process, such as a Wiener process initialized from $p_0$, the
problem can be written as
\begin{equation*}
\inf_{P\,:\;(e_0)_\# P = p_0,\; (e_1)_\# P = p_1}
\mathrm{KL}\!\left(P \,\big\|\, P^{\mathrm{ref}}\right),
\end{equation*}
where $e_t$ denotes the evaluation map at time $t$. Lagrangian flow matching,
by contrast, is based on deterministic flow dynamics described by the
continuity equation, and minimizes an action functional over probability
paths and velocity fields,
\begin{equation*}
\inf_{(p,v)}
\int_0^1 \!\!\int_{\mathbb{R}^d}
\mathcal{L}\bigl(x, v(x,t), t\bigr)\, p_t(x) \, dx \, dt,
\end{equation*}
subject to endpoint constraints $p_{t=0} = p_0$ and $p_{t=1} = p_1$. The two
formulations select transport dynamics by different principles: a
Schr\"odinger bridge minimizes a relative entropy on path measures, whereas
Lagrangian flow matching minimizes a least-action functional on
density--velocity pairs.

This comparison suggests a stochastic extension of Lagrangian flow matching.
In place of the continuity equation, one could impose a Fokker--Planck
equation associated with stochastic dynamics driven by drift and diffusion.
Such an extension would replace deterministic transport by a stochastic
counterpart, moving Lagrangian flow matching closer to Schr\"odinger-bridge
and diffusion-based models.
\section{Proofs} \label{app:_b}
This appendix collects the proofs of the two theoretical results in
the main text. \Cref{app:_d1} proves the static--dynamic equivalence
(Theorem~\ref{thm:connection_pi_v_p}) underlying the Lagrangian flow
matching framework; \Cref{app:_d2} proves the conditional action
bound (Proposition~\ref{prop:conditional_action_dominates})
underlying the conditional formulation in
\Cref{sec:conditional_lfm}. Both proofs follow the standard
optimal-transport playbook: the forward direction is established by
constructing an admissible candidate from a static optimum and
bounding its action via Jensen's inequality, while the reverse
direction lifts an admissible density--velocity pair to a path
measure via the superposition principle of
\citep{ambrosio2005gradient}.

\subsection{Proof of Theorem \ref{thm:connection_pi_v_p}.}
\label{app:_d1}

The proof proceeds in two steps. We first construct an admissible
density--velocity pair from an optimal static coupling and show
that its action is bounded by $\mathcal A_{\mathrm{stat}}$, giving
$\mathcal A_{\mathrm{dyn}} \le \mathcal A_{\mathrm{stat}}$.
We then show the reverse inequality by lifting any admissible
$(p,v)$ to a path measure via the superposition principle and
bounding its action below by $\mathcal A_{\mathrm{stat}}$. The
construction of the forward direction is encapsulated in the
following lemma.

\begin{lemma}\label{lm:vel_cont_eqn}
Let \(\pi^*\) be an optimal plan of \eqref{eq:static_problem}, and let
\[
\eta^*:=(\gamma_{x_0,x_1})_\#\pi^*.
\]
Define
\begin{align*}
p^*(\cdot,t)&:=(e_t)_\#\eta^*,\\
v^*(x,t)&:=\mathbb E_{\gamma\sim\eta^*}\bigl[\dot\gamma(t)\mid \gamma(t)=x\bigr].
\end{align*}
Then \((p^*,v^*)\in\mathfrak D\).
\end{lemma}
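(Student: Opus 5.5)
We need to show three things: $p^*(\cdot,0)=p_{\mathrm{init}}$, $p^*(\cdot,1)=p_{\mathrm{data}}$, and the continuity equation $\partial_t p^*+\nabla\cdot(v^*p^*)=0$ in the distributional sense. The endpoint conditions are immediate from the construction. Since $\gamma_{x_0,x_1}(0)=x_0$ and $\gamma_{x_0,x_1}(1)=x_1$, we have $e_0\circ\gamma_{\cdot,\cdot}=\mathrm{pr}_0$ and $e_1\circ\gamma_{\cdot,\cdot}=\mathrm{pr}_1$ on $\mathbb R^d\times\mathbb R^d$. Composing pushforwards then gives $(e_0)_\#\eta^*=(\mathrm{pr}_0)_\#\pi^*=p_{\mathrm{init}}$, because $\pi^*\in\Pi(p_{\mathrm{init}},p_{\mathrm{data}})$, and likewise at $t=1$.

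For the continuity equation, fix a test function $\varphi\in C_c^\infty(\mathbb R^d\times(0,1))$. We will show
\[
\int_0^1\!\!\int_{\mathbb R^d}\bigl(\partial_t\varphi+v^*\cdot\nabla\varphi\bigr)\,dp^*_t\,dt=0.
\]
Each curve $\gamma$ in the support of $\eta^*$ is a least-action trajectory. Under the standing assumptions ($K$ quadratic and coercive, $V$ regular) it solves the Euler--Lagrange equation, so it is $C^1$; in any case it is absolutely continuous. The chain rule along the curve therefore gives
\[
0=\varphi(\gamma(1),1)-\varphi(\gamma(0),0)=\int_0^1\bigl[\partial_t\varphi(\gamma(t),t)+\nabla\varphi(\gamma(t),t)\cdot\dot\gamma(t)\bigr]dt.
\]
We integrate this identity against $\eta^*$ and exchange the order of integration by Fubini. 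In the first term, the definition $p^*_t=(e_t)_\#\eta^*$ turns $\int\partial_t\varphi(\gamma(t),t)\,d\eta^*$ into $\int\partial_t\varphi\,dp^*_t$. In the second term, disintegrate $\eta^*$ with respect to $e_t$ and apply the tower property:
\[
\int\nabla\varphi(\gamma(t),t)\cdot\dot\gamma(t)\,d\eta^*=\int\nabla\varphi(x,t)\cdot\mathbb E_{\eta^*}[\dot\gamma(t)\mid\gamma(t)=x]\,dp^*_t(x)=\int\nabla\varphi\cdot v^*\,dp^*_t.
\]
This yields the weak continuity equation. Combined with the endpoint identities, it shows that $(p^*,v^*)\in\mathfrak D$, where $\mathfrak D$ is understood in the weak/measure sense and $p^*$ is narrowly continuous in $t$. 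Narrow continuity follows from dominated convergence because each $\gamma$ is continuous.

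\textbf{Main obstacle: integrability.} Fubini and the definition of $v^*$ as a conditional expectation both require $\int_0^1\!\int|\dot\gamma(t)|\,d\eta^*\,dt<\infty$, and we will also want $v^*\in L^2(p^*)$ for the action to make sense. We plan to get this from the coercivity $K(v)\ge\alpha\|v\|^2$:
\[
\alpha\int\!\!\int_0^1\|\dot\gamma\|^2\,dt\,d\eta^*\le\int c_{\mathcal L}\,d\pi^*+\int\!\!\int_0^1V(\gamma(t))\,dt\,d\eta^*.
\]
The first term on the right is $\mathcal A_{\mathrm{stat}}$, finite by assumption. The second term is where the hypothesis that $V$ is ``controlled by the kinetic term'' enters. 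For the harmonic case, $V$ grows quadratically and $\gamma$ is linear in $(x_0,x_1)$, so this term is bounded by the second moments $M_0,M_1$. In general we would assume a bound $V(\gamma(t))\le C(1+\|x_0\|^2+\|x_1\|^2)$ along minimizers together with finite second moments. Once this is in place, $\dot\gamma(t)\in L^2(\eta^*\otimes dt)$, conditional Jensen gives $\|v^*\|_{L^2(p^*)}\le\|\dot\gamma\|_{L^2(\eta^*)}$, and all the exchanges above are justified.

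A secondary point is that $\gamma_{x_0,x_1}$ must depend measurably on $(x_0,x_1)$ so that $\eta^*$ is well defined. This follows from uniqueness of minimizers together with continuity of the solution of the boundary-value problem in its endpoints, or from a measurable selection theorem.
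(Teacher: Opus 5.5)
Your proposal is correct and follows the same route as the paper: the endpoint identities come from the marginals of $\pi^*$, and the weak continuity equation comes from the chain rule along each absolutely continuous curve, integration against $\eta^*$, and replacing $\dot\gamma(t)$ by $v^*$ through the conditional expectation given $\gamma(t)=x$. Your extra steps fill in justifications that the paper's proof uses without comment when it exchanges integrals and defines the conditional expectation: the integrability bound from $K(v)\ge\alpha\|v\|^2$ (with your added growth assumption on $V$ along minimizers) and the measurability of $(x_0,x_1)\mapsto\gamma_{x_0,x_1}$.
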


\begin{proof}
First, since \(\pi^*\in\Pi(p_{\mathrm{init}},p_{\mathrm{data}})\), we have
\[
p^*(\cdot,0)=(e_0)_\#\eta^*=p_{\mathrm{init}},
\qquad
p^*(\cdot,1)=(e_1)_\#\eta^*=p_{\mathrm{data}}.
\]

It remains to verify the continuity equation in the sense of distributions.
Let \(\varphi\in C_c^\infty(\mathbb R^d\times(0,1))\). Since
\(p^*(\cdot,t)=(e_t)_\#\eta^*\), we have
\[
\int_{\mathbb R^d}\varphi(x,t)p^*(x,t)\,dx
=
\int_{AC([0,1];\mathbb R^d)}
\varphi(\gamma(t),t)\,d\eta^*(\gamma).
\]
Hence
\[
\int_0^1\!\!\int_{\mathbb R^d}
\partial_t\varphi(x,t)p^*(x,t)\,dx\,dt
=
\int_0^1\!\!\int_{AC([0,1];\mathbb R^d)}
\partial_t\varphi(\gamma(t),t)\,d\eta^*(\gamma)\,dt.
\]

Since \(\eta^*\) is concentrated on absolutely continuous curves,
the map \(t\mapsto \varphi(\gamma(t),t)\) is absolutely continuous for
\(\eta^*\)-a.e. \(\gamma\), and
\[
\frac{d}{dt}\varphi(\gamma(t),t)
=
\partial_t\varphi(\gamma(t),t)
+
\nabla\varphi(\gamma(t),t)\cdot\dot\gamma(t)
\]
for a.e. \(t\in(0,1)\). Therefore,
\[
\int_0^1 \frac{d}{dt}\varphi(\gamma(t),t)\,dt
=
\varphi(\gamma(1),1)-\varphi(\gamma(0),0).
\]
Since \(\varphi\) has compact support in \((0,1)\) in time, the right-hand side
vanishes. Integrating over \(\eta^*\), we obtain
\[
\int_0^1\!\!\int_{AC([0,1];\mathbb R^d)}
\partial_t\varphi(\gamma(t),t)\,d\eta^*(\gamma)\,dt
=
-
\int_0^1\!\!\int_{AC([0,1];\mathbb R^d)}
\nabla\varphi(\gamma(t),t)\cdot\dot\gamma(t)\,d\eta^*(\gamma)\,dt.
\]

By definition,
\[
p^*(\cdot,t)=(e_t)_\#\eta^*,
\]
so for any measurable vector field $F:\mathbb R^d\to\mathbb R^d$,
\[
\int_{AC([0,1];\mathbb R^d)}
F(\gamma(t))\cdot\dot\gamma(t)\,d\eta^*(\gamma)
=
\int_{\mathbb R^d}
F(x)\cdot
\mathbb E_{\gamma\sim\eta^*}\bigl[\dot\gamma(t)\mid\gamma(t)=x\bigr]
\,p^*(x,t)\,dx.
\]
Taking
\[
F(x)=\nabla\varphi(x,t)
\]
and using the definition
\[
v^*(x,t)
=
\mathbb E_{\gamma\sim\eta^*}\bigl[\dot\gamma(t)\mid\gamma(t)=x\bigr],
\]
we obtain
\[
\int_{AC([0,1];\mathbb R^d)}
\nabla\varphi(\gamma(t),t)\cdot\dot\gamma(t)\,d\eta^*(\gamma)
=
\int_{\mathbb R^d}
\nabla\varphi(x,t)\cdot v^*(x,t)p^*(x,t)\,dx.
\]
Therefore
\[
\int_0^1\!\!\int_{\mathbb R^d}
\Bigl(
\partial_t\varphi(x,t)
+
\nabla\varphi(x,t)\cdot v^*(x,t)
\Bigr)
p^*(x,t)\,dx\,dt
=
0,
\]
which proves
\[
\partial_t p^*+\operatorname{div}(v^*p^*)=0
\]
in the sense of distributions. Hence \((p^*,v^*)\in\mathfrak D\).
\end{proof}

\begin{proof}[Proof of Theorem \ref{thm:connection_pi_v_p}]

We first prove $\mathcal A_{\mathrm{dyn}}
\le \mathcal A_{\mathrm{stat}}$.

Let \(\pi^*\) be an optimal plan of \eqref{eq:static_problem}.
Since the minimizer \(\gamma_{x_0,x_1}\) of \eqref{eq:path_min_action}
is unique and the map is measurable, we can define the pushforward probability measure on the path space:
\[
\eta^*:=(\gamma_{x_0,x_1})_\#\pi^*.
\]
For each \(t\in[0,1]\), define
\[
p^*(\cdot,t):=(e_t)_\#\eta^*.
\]
Then
\[
p^*(\cdot,0)=p_{\mathrm{init}},
\qquad
p^*(\cdot,1)=p_{\mathrm{data}}.
\]
Since \(\eta^*\) is concentrated on absolutely continuous curves,
\(\dot\gamma(t)\) exists for \(\eta^*\)-a.e.\ \(\gamma\) and a.e.\ \(t\).
We define the Eulerian velocity field \(v^*\) by
\[
v^*(x,t)
:=
\mathbb E_{\gamma\sim\eta^*}\bigl[\dot\gamma(t)\mid \gamma(t)=x\bigr].
\]
By Lemma~\ref{lm:vel_cont_eqn}, the pair \((p^*,v^*)\) satisfies
\[
\partial_t p^*+\operatorname{div}(v^*p^*)=0
\]
in the sense of distributions, and therefore
\((p^*,v^*)\in\mathfrak D\).

We now estimate the action of \((p^*,v^*)\).
Since \(\mathcal L(x,\cdot,t)\) is convex, Jensen's inequality yields
\[
\mathcal L\!\left(x,v^*(x,t),t\right)
=
\mathcal L\!\left(
x,
\mathbb E_{\gamma\sim\eta^*}\!\left[\dot\gamma(t)\mid \gamma(t)=x\right],
t
\right)
\le
\mathbb E_{\gamma\sim\eta^*}\!\left[
\mathcal L(x,\dot\gamma(t),t)\mid \gamma(t)=x
\right]
\]
Multiplying by \(p^*(x,t)\) and integrating in \(x\), we obtain
\[
\int_{\mathbb R^d}\mathcal L(x,v^*(x,t),t)\,p^*(x,t)\,dx
\le
\int_{\mathbb R^d}
\mathbb E_{\gamma\sim\eta^*}\!\left[
\mathcal L(x,\dot\gamma(t),t)\mid \gamma(t)=x
\right]p^*(x,t)\,dx.
\]
By the defining property of conditional expectation,
\[
\int_{\mathbb R^d}
\mathbb E_{\gamma\sim\eta^*}\!\left[
\mathcal L(x,\dot\gamma(t),t)\mid \gamma(t)=x
\right]p^*(x,t)\,dx
=
\int_{AC([0,1];\mathbb R^d)}
\mathcal L(\gamma(t),\dot\gamma(t),t)\,d\eta^*(\gamma).
\]
Therefore,
\[
\int_{\mathbb R^d}\mathcal L(x,v^*(x,t),t)\,p^*(x,t)\,dx
\le
\int_{AC([0,1];\mathbb R^d)}
\mathcal L(\gamma(t),\dot\gamma(t),t)\,d\eta^*(\gamma).
\]
Integrating over \(t\in[0,1]\), it follows that
\[
\int_0^1\int_{\mathbb R^d}
\mathcal L(x,v^*(x,t),t)\,p^*(x,t)\,dx\,dt
\le
\int_{AC([0,1];\mathbb R^d)}
\left(
\int_0^1 \mathcal L(\gamma(t),\dot\gamma(t),t)\,dt
\right)d\eta^*(\gamma).
\]

By construction of \(\eta^*\), for \(\pi^*\)-a.e.\ \((x_0,x_1)\),
\(\gamma_{x_0,x_1}\) minimizes \eqref{eq:path_min_action}, so
\[
\int_0^1
\mathcal L(\gamma_{x_0,x_1}(t),\dot\gamma_{x_0,x_1}(t),t)\,dt
=
c_{\mathcal L}(x_0,x_1).
\]
Therefore
\[
\int_{AC([0,1];\mathbb R^d)}
\left(
\int_0^1
\mathcal L(\gamma(t),\dot\gamma(t),t)\,dt
\right)d\eta^*(\gamma)
=
\int_{\mathbb R^d\times\mathbb R^d}
c_{\mathcal L}(x_0,x_1)\,d\pi^*(x_0,x_1).
\]
Since \(\pi^*\) is optimal for \eqref{eq:static_problem},
\[
\int_{\mathbb R^d\times\mathbb R^d}
c_{\mathcal L}(x_0,x_1)\,d\pi^*(x_0,x_1)
=
\mathcal A_{\mathrm{stat}}.
\]
Thus
\[
\int_0^1\int_{\mathbb R^d}
\mathcal L(x,v^*(x,t),t)\,p^*(x,t)\,dx\,dt
\le
\mathcal A_{\mathrm{stat}},
\]
and hence $\mathcal A_{\mathrm{dyn}}
\le \mathcal A_{\mathrm{stat}}$.

We next prove the reverse inequality $\mathcal A_{\mathrm{dyn}} \ge\mathcal A_{\mathrm{stat}}$.

Let \((p,v)\in\mathfrak D\) have finite action. Since \(V\) is controlled by
the kinetic term in the sense stated above, finite action implies finite
kinetic energy. Therefore, using the positive definiteness of \(K\),
\[
\mathbb E_{(x,t)\sim p}\|v(x,t)\|^2
\le
C\mathbb E_{(x,t)\sim p}K(v(x,t))
<\infty .
\]
By \cite[Theorem 8.2.1]{ambrosio2005gradient}, there exists a probability
measure \(\eta\) concentrated on \(AC([0,1];\mathbb R^d)\) such that
\[
(e_t)_\#\eta=p(\cdot,t)
\quad\text{for all }t\in[0,1],
\qquad
\dot\gamma(t)=v(\gamma(t),t)
\quad
\eta\text{-a.e.}\;\; \gamma
\]
Define $\pi:=(e_0,e_1)_\#\eta\in
\Pi(p_{\mathrm{init}},p_{\mathrm{data}})$. Then
\[
\int_0^1\int_{\mathbb R^d}
\mathcal L(x,v(x,t),t)p(x,t)\,dx\,dt
=
\int_{AC([0,1];\mathbb R^d)}
\left(
\int_0^1
\mathcal L(\gamma(t),\dot\gamma(t),t)\,dt
\right)d\eta(\gamma).
\]
By definition of the least-action cost,
\[
\int_0^1
\mathcal L(\gamma(t),\dot\gamma(t),t)\,dt
\ge
c_{\mathcal L}(\gamma(0),\gamma(1)).
\]
Therefore
\[
\int_0^1\int_{\mathbb R^d}
\mathcal L(x,v(x,t),t)p(x,t)\,dx\,dt
\ge
\int_{\mathbb R^d\times\mathbb R^d}
c_{\mathcal L}(x_0,x_1)\,d\pi(x_0,x_1)
\ge
\mathcal A_{\mathrm{stat}}.
\]
Taking the infimum over all admissible pairs
\((p,v)\in\mathfrak D\) gives
\[
\mathcal A_{\mathrm{dyn}}
\ge
\mathcal A_{\mathrm{stat}}.
\]
This completes the proof.

\end{proof}

\subsection{Proof of Proposition \ref{prop:conditional_action_dominates}}
\label{app:_d2}
\begin{proof}
The proof has two steps. We first show that the marginal pair
$(p_t, v_t)$ induced by any admissible conditional family is itself
admissible for the unconditional problem; the inequality then follows
from Jensen applied to the convexity of $\mathcal L(x,\cdot,t)$.

Let \(\{p_t(\cdot\mid z),v_t(\cdot\mid z)\}_z\) be any admissible conditional
family, and let \((p_t,v_t)\) be the induced marginal path and velocity field defined by
\begin{equation}
p_t(x)
=
\int p_t(x\mid z)\,q(dz),\qquad
v_t(x)
=
\int
v_t(x\mid z)
\frac{p_t(x\mid z)}{p_t(x)}
\,q(dz).
\end{equation}
Integrating the continuity equations for $p_t(\cdot | z)$ over \(z\), we obtain
\begin{align}
\partial_t p_t(x)
+
\div\bigl(v_t(x)p_t(x)\bigr)
=
0.    
\end{align}
Moreover, the endpoint mixture conditions
\eqref{eq:conditional_endpoint_mixtures} imply
\[
p_0=p_{\mr{init}},
\qquad
p_1=p_{\mr{data}}.
\]
Thus, the marginal pair \((p_t,v_t)\in\mathfrak D\) is admissible for the
unconditional Lagrangian flow matching problem; see
Section \ref{sec:lagrangian_ot}.

Since \(\mathcal L_z=\mathcal L\) and \(\mathcal L(x,\cdot,t)\) is
convex, Jensen's inequality gives, for \(p_t(x)>0\),
\[
\mathcal L\bigl(x,v_t(x),t\bigr)
\le
\int
\mathcal L\bigl(x,v_t(x\mid z),t\bigr)
\frac{p_t(x\mid z)}{p_t(x)}
\,q(dz).
\]
Multiplying by \(p_t(x)\) and integrating over \(x\) and \(t\), we obtain
\[
\int_0^1
\int_{\mathbb R^d}
\mathcal L\bigl(x,v_t(x),t\bigr)p_t(x)\,dx\,dt
\le
\mathbb E_{z\sim q}
\int_0^1
\int_{\mathbb R^d}
\mathcal L\bigl(x,v_t(x\mid z),t\bigr)
p_t(x\mid z)\,dx\,dt .
\]
Since \((p_t,v_t)\) is admissible for the unconditional least-action problem,
\[
\mathcal A_{\mr{dyn}}
\le
\int_0^1
\int_{\mathbb R^d}
\mathcal L\bigl(x,v_t(x),t\bigr)p_t(x)\,dx\,dt .
\]
Combining the two inequalities and then taking the infimum over admissible
conditional families yields
\[
\mathcal A_{\mr{dyn}}
\le
\mathcal A_{\mr{cond}}.
\]

Finally, if the unconditional problem admits an optimizer
\((p_t^*,v_t^*)\), then equality is attained by the trivial conditioning
structure, namely by taking \(z\) to be constant and setting
\[
p_t(\cdot\mid z)=p_t^*,
\qquad
v_t(\cdot\mid z)=v_t^*.
\]
This proves the claim.
\end{proof}

\section{Mini-batch Conditional Harmonic Flow Matching}
\label{app:harmonic_flow_matching}

The following examples illustrate how the Lagrangian $\mathcal L$ and the batch size $n$ shape the resulting algorithm: the harmonic Lagrangian preserves the OT coupling but curves the training trajectories, mini-batch OT-CFM is recovered as the $\omega\to 0$ kinetic limit, and the batch size $n=1$ case collapses to independent-coupling flow matching.
\begin{example}[Mini-batch harmonic flow]
\label{ex:minibatch_harmonic}
The harmonic Lagrangian $\mathcal L_\omega$ of
\Cref{ex:harmonic_oscillator} yields the harmonic trajectory
\eqref{eq:harmonic_gamma} between matched endpoints, with cost
\[
c_\omega(x_0^{(i)},x_1^{(j)})
\;=\; \frac{\omega}{2\sin\omega}
\bigl[\cos\omega\,(\|x_0^{(i)}\|^2+\|x_1^{(j)}\|^2)
- 2\,x_0^{(i)}\!\cdot\!x_1^{(j)}\bigr].
\]
The empirical marginals fix
$\sum_i \pi_{ij}\|x_0^{(i)}\|^2$ and $\sum_j \pi_{ij}\|x_1^{(j)}\|^2$
for any feasible $\pi$, so
\[
\sum_{i,j}\pi_{ij}\,c_\omega(x_0^{(i)},x_1^{(j)})
\;=\; -\,\frac{\omega}{\sin\omega}\sum_{i,j}\pi_{ij}\,
x_0^{(i)}\!\cdot\!x_1^{(j)}
\;+\; C(z),
\]
with $C(z)$ independent of $\pi$. By polarization, this is
equivalent to minimizing the quadratic cost
$\sum_{i,j}\pi_{ij}\tfrac12\|x_0^{(i)}-x_1^{(j)}\|^2$, so the
discrete coupling \eqref{eq:minibatch_discrete_ot} remains the
quadratic-cost mini-batch OT --- only the training trajectory in
\eqref{eq:minibatch_lfm_objective} changes. The curvature of the
matched-pair trajectories visible across all panels of
\Cref{fig:ot_batch_sweep} is the harmonic signature: it is set by
$\omega$ and is independent of $n$. Additionally, panels
\ref{fig:ot_batch_sweep_10}--\ref{fig:ot_batch_sweep_100} of
\Cref{fig:ot_batch_sweep} show the intermediate regime, where
increasing $n$ tightens the empirical coupling toward $\pi^*$ while
$\omega$ continues to control trajectory curvature.
$\hfill\blacksquare$
\end{example}
 
\begin{example}[Mini-batch OT-CFM as the $\omega\to 0$ limit]
\label{ex:minibatch_kinetic}
Taking the frequency to limit $\omega\to 0$ in \Cref{ex:minibatch_harmonic} recovers the
kinetic Lagrangian $\mathcal L=\tfrac12\|v\|^2$ of
\Cref{ex:free_particle}, with quadratic cost
$c_{\mathcal L}(x_0,x_1)=\tfrac12\|x_0-x_1\|^2$ and affine trajectory
$\gamma_{x_0,x_1}(t)=(1-t)x_0+tx_1$. Both
\eqref{eq:minibatch_discrete_ot} and
\eqref{eq:minibatch_lfm_objective} then reduce to mini-batch OT-CFM
\cite{tong2024improving,pooladian2023multisample}.
$\hfill\blacksquare$
\end{example}

\begin{example}[Independent-coupling flow matching as the $n=1$ case]
\label{ex:minibatch_n1}
The other degenerate limit is $n=1$, where the discrete OT problem
is trivial: the conditional coupling reduces to the independent
product $\delta_{x_0^{(1)}}\otimes\delta_{x_1^{(1)}}$, and
marginalizing over $z$ recovers the unconditional objective
\eqref{eq:training_obj} with the independent coupling
$p_{\mathrm{init}}\otimes p_{\mathrm{data}}$. \Cref{fig:ot_batch_sweep_1} illustrates this regime: every source mode emits trajectories spanning both target moons, the many-to-many routing characteristic of independent coupling. Additionally, with the kinetic Lagrangian, this further reduces to the straight-line conditional-flow-matching objective of \cite{lipman2023flow}. 
$\hfill\blacksquare$
\end{example}
 
Mini-batch harmonic flow matching thus reuses the OT-CFM solver while replacing straight-line targets by curved least-action targets controlled by $\omega$. We exploit this in \Cref{sec:experiments}, where we use the same mini-batch OT solver as OT-CFM and vary only $\omega$ (and the anisotropic potential $A$ in \Cref{app:anisotropic_harmonic}) to study the effect of the Lagrangian on the learned dynamics.

\section{Anisotropic Harmonic Flow Matching}
\label{app:anisotropic_harmonic}
 
This appendix develops the anisotropic harmonic case in detail and
reports synthetic experiments. The construction generalizes the
isotropic harmonic Lagrangian of \Cref{ex:harmonic_oscillator} by
replacing the scalar frequency $\omega$ with a symmetric
positive-definite matrix $A\in\mathbb R^{d\times d}$, allowing
different curvatures along different directions in $\mathbb R^d$.
 
\paragraph{Lagrangian, trajectory, and cost}
 
Let $A\in\mathbb R^{d\times d}$ be symmetric positive-definite with
spectral decomposition $A = Q\Lambda Q^\top$ and $\Lambda = \mathrm{diag}(\omega_1^2,\ldots,\omega_d^2)$, $\omega_k\in(0,\pi)$. Define the anisotropic
harmonic Lagrangian
\begin{equation}
\label{eq:aniso_lagrangian}
\mathcal L_A(x,v)
\;=\; \tfrac12\|v\|^2 - \tfrac12\,x^\top A x.
\end{equation}
The Euler--Lagrange equation $\ddot\gamma + A\gamma = 0$ decouples
under the change of variables $\tilde\gamma = Q^\top\gamma$ into
$d$ scalar oscillators, one per eigendirection. Solving each scalar
problem as in \Cref{ex:harmonic_oscillator} and transforming back
gives the closed-form least-action trajectory
\begin{equation}
\label{eq:aniso_gamma}
\gamma_{x_0,x_1}^{A}(t)
\;=\; \frac{\sin\bigl((1-t)\sqrt{A}\bigr)}{\sin\sqrt{A}}\,x_0
+ \frac{\sin\bigl(t\sqrt{A}\bigr)}{\sin\sqrt{A}}\,x_1,
\end{equation}
where the matrix functions are defined by $f(A) = Q\,f(\Lambda)\,Q^\top$
applied entrywise to the eigenvalues. The velocity is
\begin{equation}
\label{eq:aniso_gamma_dot}
\dot\gamma_{x_0,x_1}^{A}(t)
\;=\; -\,\frac{\sqrt{A}\cos\bigl((1-t)\sqrt{A}\bigr)}{\sin\sqrt{A}}\,x_0
+ \frac{\sqrt{A}\cos\bigl(t\sqrt{A}\bigr)}{\sin\sqrt{A}}\,x_1.
\end{equation}
Both reduce to \eqref{eq:harmonic_gamma} when $A = \omega^2 I$, and
to the affine interpolation as $A\to 0$.
 
\paragraph{Cost.}
Substituting \eqref{eq:aniso_gamma}--\eqref{eq:aniso_gamma_dot} into
the action and using the per-mode calculation from
\Cref{ex:harmonic_flow} gives
\begin{equation}
\label{eq:aniso_cost}
c_A(x_0,x_1)
\;=\; \tfrac12\,x_0^\top \Phi(A)\,x_0
+ \tfrac12\,x_1^\top \Phi(A)\,x_1
- x_0^\top \Psi(A)\,x_1,
\end{equation}
where
\begin{equation}
\label{eq:aniso_Phi_Psi}
\Phi(A) \;:=\; \sqrt{A}\cot\sqrt{A},
\qquad
\Psi(A) \;:=\; \sqrt{A}\,\csc\sqrt{A},
\end{equation}
both symmetric positive-definite for the eigenvalue range
$\omega_k\in(0,\pi)$. The isotropic limit $A=\omega^2 I$ recovers
$\Phi(A) = \omega\cot\omega\cdot I$ and $\Psi(A) = (\omega/\sin\omega)\,I$,
matching the prefactors in \Cref{ex:harmonic_flow}.
 
\paragraph{Optimal coupling: a generalized OT problem}
 
A salient feature of the anisotropic case is that the polarization
argument of \Cref{ex:harmonic_flow} no longer reduces the static
problem to quadratic-cost OT. For any
$\pi\in\Pi(p_{\mathrm{init}},p_{\mathrm{data}})$,
\[
\int x_0^\top \Phi(A) x_0\,d\pi
= \int x_0^\top \Phi(A) x_0\,p_{\mathrm{init}}(dx_0)
\;=\; \mathrm{tr}\bigl(\Phi(A)\,\Sigma_0\bigr) + \mu_0^\top\Phi(A)\mu_0,
\]
where $\mu_0$ and $\Sigma_0$ are the mean and covariance of
$p_{\mathrm{init}}$, and analogously for $x_1$ under
$p_{\mathrm{data}}$. Both quadratic terms in
\eqref{eq:aniso_cost} are therefore $\pi$-independent, so
\begin{equation}
\label{eq:aniso_static_reduction}
\int c_A(x_0,x_1)\,d\pi
\;=\; -\int x_0^\top \Psi(A)\, x_1\,d\pi \;+\; \mathrm{const}.
\end{equation}
Minimizing over $\pi$ is equivalent to maximizing the
\emph{generalized cross-correlation}
$\int x_0^\top \Psi(A) x_1\,d\pi$. By the polarization identity
in the metric induced by $\Psi(A)$,
$x_0^\top\Psi(A) x_1
= \tfrac12\bigl(x_0^\top\Psi(A)x_0
+ x_1^\top\Psi(A) x_1
- (x_0-x_1)^\top\Psi(A)(x_0-x_1)\bigr)$,
this is in turn equivalent to the weighted-quadratic OT problem
\begin{equation}
\label{eq:aniso_weighted_ot}
\pi_A^*
\;\in\;
\operatorname*{argmin}_{\pi\in\Pi(p_{\mathrm{init}},p_{\mathrm{data}})}
\int \tfrac12\,(x_0-x_1)^\top \Psi(A)\, (x_0-x_1)\,d\pi(x_0,x_1).
\end{equation}
Equivalently, $\pi_A^*$ is the standard quadratic OT plan between
the pushforwards of $p_{\mathrm{init}}$ and $p_{\mathrm{data}}$ under
the linear map $x\mapsto \Psi(A)^{1/2}x$. Unlike the isotropic case,
$\pi_A^*$ generally depends on $A$ when $\Psi(A)$ is not a scalar
multiple of identity, so anisotropic harmonic flow matching does not
in general reuse the OT-CFM coupling. The mini-batch approximation
\eqref{eq:minibatch_discrete_ot} carries over by replacing
$\tfrac12\|x_0-x_1\|^2$ with
$\tfrac12(x_0-x_1)^\top\Psi(A)(x_0-x_1)$, and is solved by the same
linear-assignment routines.
 
\paragraph{Choice of \texorpdfstring{$A$}{A} via PCA.}
We parameterize $A$ from a principal component analysis of the data. Let
$\Sigma_{\mathrm{data}} = U\,\Lambda_{\mathrm{data}}\,U^\top$
with $\Lambda_{\mathrm{data}} = \mathrm{diag}(\lambda_1,\ldots,\lambda_d)$
and $\lambda_1\ge\cdots\ge\lambda_d>0$. We align the eigenframe of $A$
with the PCA frame $U$ and assign per-mode frequencies as a
monotonically decreasing function of the principal variances:
\begin{equation}
\label{eq:aniso_A_pca}
A \;=\; U\,\mathrm{diag}\bigl(\omega_1^2,\ldots,\omega_d^2\bigr)\,U^\top,
\qquad
\omega_k \;=\; \omega_{\max}\!\left(\frac{\lambda_d}{\lambda_k}\right)^{\!\alpha},
\end{equation}
with hyperparameters $\omega_{\max}\in(0,\pi)$ and $\alpha\ge 0$. This
yields small $\omega_k$ along high-variance principal directions —
where data already provide a strong signal and the trajectory should
remain close to the affine interpolant — and larger $\omega_k$ along
low-variance directions, where the harmonic potential more aggressively
contracts deviations from the geodesic. The eigenvalue constraint
$\omega_k\in(0,\pi)$ is automatically satisfied by the choice
$\omega_{\max}<\pi$. The exponent $\alpha$ controls anisotropy:
$\alpha=0$ recovers the isotropic harmonic case
$A=\omega_{\max}^2 I$ of \Cref{ex:harmonic_flow}, while $\alpha=\tfrac12$
corresponds to $A\propto \Sigma_{\mathrm{data}}^{-1}$ rescaled to fit
the spectral cap, recovering the covariance-aligned parameterization
discussed informally above. In practice, we estimate $U$ and
$\{\lambda_k\}$ from a held-out subset of $p_{\mathrm{data}}$ and freeze
$A$ during training; if the ambient dimension is large, we apply this
construction on the top-$r$ principal subspace and set
$\omega_k = \omega_{\max}$ for the orthogonal complement, so that
$A$ acts only on the directions where data anisotropy is statistically
meaningful.

\begin{wrapfigure}[14]{r}{0.3\textwidth}
\vspace{-0.4in}
\centering
\includegraphics[width=\linewidth]{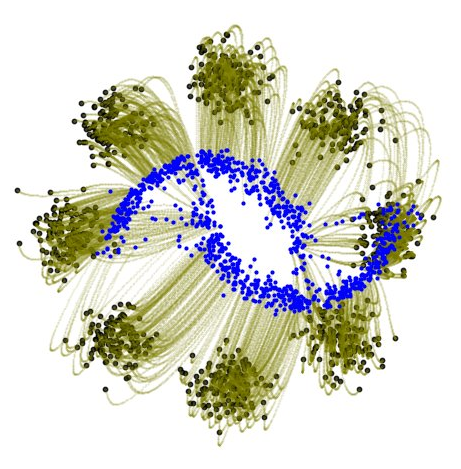}
\caption{OT-Aniso flow on the $8$-Gaussians$\to$two-moons benchmark.}
\label{fig:otaniso_8gaussian}
\end{wrapfigure}
\paragraph{Synthetic experiments.}
\Cref{fig:otaniso_8gaussian} visualizes the OT-Aniso flow on a 2D
toy benchmark in which eight Gaussian source modes (olive) surround
a two-moons target (blue). Olive curves show learned sample
trajectories integrated forward from each source cluster to the
target. The anisotropic harmonic potential bends each trajectory
along the data-subspace directions of largest variance, producing
the characteristic outward-then-inward sweep visible at every source
mode. We further evaluate OT-Aniso on CIFAR-10 image generation in
\Cref{tab:fid_nfe} and \Cref{fig:cifar10}. The anisotropic variant
OT-Aniso achieves $\mathrm{FID} = 4.078$ at adaptive integration,
no better than OT-SI and worse than every harmonic and OT-CFM
variant. We attribute this to the high-dimensional and largely
uncorrelated structure of CIFAR-10 pixel data: when most coordinates
are approximately independent, the rotation recovered from a small
fit batch carries little geometric signal beyond what an isotropic
schedule already provides, and the per-direction frequency assignment confers no meaningful advantage. Learning the potential $A$ directly from data, rather than fitting it from a single batch of empirical variances, is a natural direction for future work.
\section{Derivation of the normalized path energy}
\label{app:npe-derivation}

This appendix supplies the closed forms behind the diagnostic
$\mathrm{NPE}_\omega$ used in \Cref{sec:experiments}: the harmonic
geodesic between two endpoints, its kinetic energy in closed form, the
resulting harmonic-OT cost $C_\omega(\pi_0, \pi_1)$, and a decomposition
of $\mathrm{NPE}_\omega - 1$ into a coupling term and a path term that
isolate distinct sources of deviation.

\paragraph{Harmonic geodesics.}
A harmonic geodesic at frequency $\omega$ is a smooth path
$\phi:[0,1]\to\mathbb{R}^d$ satisfying $\ddot\phi_t + \omega^2\phi_t = 0$.
The general solution $\phi_t = a\cos(\omega t) + b\sin(\omega t)$,
specialized to the boundary conditions $\phi_0 = x_0$ and $\phi_1 = x_1$,
gives, for $\omega\in(0,\pi)$,
\begin{equation}
\label{eq:harmonic-geodesic}
\phi_t^\star(x_0, x_1) \;=\; \cos(\omega t)\,x_0
   + \sin(\omega t)\,B,
\qquad
B \;\coloneqq\; \frac{x_1 - \cos(\omega)\,x_0}{\sin(\omega)}.
\end{equation}
Differentiating yields
$\dot\phi_t^\star = -\omega\sin(\omega t)\,x_0 + \omega\cos(\omega t)\,B$.

\paragraph{Per-pair kinetic energy.}
The kinetic energy along $\phi^\star$ is
$k_\omega(x_0,x_1) = \int_0^1 \tfrac{1}{2}\|\dot\phi_t^\star\|^2\,dt$.
Expanding the square gives
\begin{equation}
\label{eq:kinetic-expansion}
\|\dot\phi_t^\star\|^2
= \omega^2\Big[
   \|x_0\|^2 \sin^2(\omega t)
   + \|B\|^2 \cos^2(\omega t)
   - 2\,(x_0\!\cdot\! B)\,\sin(\omega t)\cos(\omega t)
\Big].
\end{equation}
Integrating each term on $[0,1]$ produces the trigonometric integrals
\begin{equation}
\label{eq:trig-integrals}
I_{ss}(\omega) = \tfrac{1}{2} - \tfrac{\sin(2\omega)}{4\omega},
\qquad
I_{cc}(\omega) = \tfrac{1}{2} + \tfrac{\sin(2\omega)}{4\omega},
\qquad
I_{sc}(\omega) = \tfrac{\sin^2(\omega)}{2\omega},
\end{equation}
yielding the closed form
\begin{equation}
\label{eq:per-pair-kinetic}
k_\omega(x_0, x_1)
\;=\; \tfrac{\omega^2}{2}\Big[
   \|x_0\|^2\,I_{ss}(\omega)
 + \|B\|^2\,I_{cc}(\omega)
 - 2\,(x_0\!\cdot\! B)\,I_{sc}(\omega)
\Big].
\end{equation}
Equation~\eqref{eq:per-pair-kinetic} can be evaluated directly from
$(x_0, x_1)$ via $B$; we verify it numerically against Simpson-rule
integration of $\|\dot\phi^\star\|^2$ to $\sim\!10^{-15}$ relative error.

\paragraph{Limiting cases.}
Two limits provide useful sanity checks. As $\omega\to 0$,
$B\to x_1 - x_0$, $I_{ss}\to 0$, $I_{cc}\to 1$, and $I_{sc}\to 0$, so
\begin{equation}
\label{eq:limit-omega-zero}
k_\omega(x_0,x_1) \;\xrightarrow{\omega\to 0}\; \tfrac{1}{2}\|x_1 - x_0\|^2,
\end{equation}
recovering the linear-interpolant kinetic energy: harmonic geodesics
degenerate to straight lines, and $C_\omega \to W_2^2/2$. As
$\omega\to\pi^-$, $\sin(\omega)\to 0$ and the $B$-term diverges unless
$x_1 = -x_0$, reflecting the loss of geodesic uniqueness at the
period boundary; \eqref{eq:per-pair-kinetic} is therefore restricted
to $\omega\in(0,\pi)$.

\paragraph{Harmonic-OT cost.}
The harmonic transport cost between $\pi_0$ and $\pi_1$ is the infimum
of expected per-pair kinetic energy over couplings,
\begin{equation}
\label{eq:harmonic-ot}
C_\omega(\pi_0, \pi_1)
\;=\; \inf_{\pi\in\Pi(\pi_0,\pi_1)}\,
   \mathbb{E}_{(x_0,x_1)\sim\pi}\,k_\omega(x_0, x_1).
\end{equation}
Substituting~\eqref{eq:per-pair-kinetic} and using
$B = (x_1 - \cos(\omega)\,x_0)/\sin(\omega)$, the cross-term
$(x_0\!\cdot\! B) = (\langle x_0, x_1\rangle - \cos(\omega)\|x_0\|^2)/\sin(\omega)$
is the only coupling-dependent quantity (the marginal moments
$\mathbb{E}\|x_0\|^2,\mathbb{E}\|x_1\|^2$ are fixed). Hence the optimal
coupling for $C_\omega$ is the same coupling that maximizes
$\mathbb{E}\langle x_0, x_1\rangle$, which is the standard $W_2$ coupling
on centered measures. We estimate $C_\omega$ on minibatches by solving
the linear assignment problem with cost matrix
$M_{ij} = k_\omega(x_0^{(i)}, x_1^{(j)})$ via the Hungarian
algorithm~\citep{kuhn1955hungarian}.

\paragraph{Decomposition of $\mathrm{NPE}_\omega$.}
Let $\hat\pi_\theta$ denote the coupling induced by the learned flow
(each $x_0$ paired with the endpoint $\phi_1(x_0)$ produced by ODE
integration of $v_\theta$), and $\pi^\star$ the harmonic-OT coupling.
Define the model-coupling action
$\bar k_\theta = \mathbb{E}_{(x_0,x_1)\sim\hat\pi_\theta}\,k_\omega(x_0,x_1)$.
Then
\begin{equation}
\label{eq:decomposition}
K[\phi] - C_\omega(\pi_0,\pi_1)
\;=\;
\underbrace{\bar k_\theta - C_\omega(\pi_0,\pi_1)}_{\text{coupling excess}}
\;+\;
\underbrace{K[\phi] - \bar k_\theta}_{\text{path excess}}.
\end{equation}
The coupling excess is non-negative by definition of the OT infimum
and vanishes iff $\hat\pi_\theta = \pi^\star$; the path excess is signed
and vanishes iff each model trajectory traces the harmonic geodesic
between its endpoints. Both vanish iff $\phi$ realizes the harmonic-OT
geodesic flow at frequency $\omega$, in which case
$\mathrm{NPE}_\omega = 0$. The decomposition lets us attribute observed
NPE deviations to either an incorrect pairing (e.g.\ the independent
coupling $(x_0, z)$ used by vanilla harmonic flow) or non-geodesic
trajectories (e.g.\ rectified flow).

\paragraph{Estimation.}
We integrate the model ODE $\dot\phi_t = v_\theta(\phi_t,t)$ with a
fixed-step RK4 scheme at $N=200$ steps, recording
$\tfrac12\|v_\theta(\phi_t,t)\|^2$ on the same grid, and compute
$K[\phi]$ from these samples by composite Simpson's rule. The
harmonic-OT cost $C_\omega$ in \eqref{eq:harmonic-ot} is estimated by
solving the linear assignment problem with cost matrix
$M_{ij} = k_\omega(x_0^{(i)}, x_1^{(j)})$ on a fixed minibatch of size
$n = 512$, drawn independently of the RK4 evaluation batch. The
coupling and path components of \eqref{eq:decomposition} are computed
on the RK4 batch using the model-induced pairs
$(x_0,\,\phi_1(x_0))$. We report all NPE-based diagnostics at the
reference frequency $\omega = 1$, and aggregate as mean $\pm$ standard
deviation across five independent training runs (different seeds for
data sampling, model initialization, and optimization).
\section{Experimental Setup}
\label{app:exp_setup}

This appendix gives the experimental details deferred from
\Cref{sec:experiments}: training protocols, evaluation procedures, and
supplementary results for the three tracks (2D synthetic, single-cell
trajectory interpolation, and CIFAR-10 image generation). Common
settings shared across tracks are stated once in
\Cref{app:exp_setup_shared} and not repeated.

\subsection{Shared protocol}
\label{app:exp_setup_shared}

\paragraph{Framework and architectures.}
All experiments build on the conditional flow matching framework of
\citep{tong2024improving}: model architectures, optimizer settings,
training schedules, dataset preprocessing, and evaluation harnesses
follow that codebase, with the harmonic conditional flow matchers
swapping in for the OT-CFM matcher where appropriate. The 2D and
single-cell tracks use a time-conditioned MLP; the CIFAR-10 track
uses the U-Net configuration of \citep{tong2024improving}. We refer
to \citep{tong2024improving} for hyperparameter values; only
deviations or additions are listed below.

\paragraph{Methods.}
We compare six conditional flow matching variants. All share
architecture, optimizer, training-step count, and minibatch size
within a given track; differences arise solely from the conditional
probability path and source--target coupling.
\begin{itemize}[leftmargin=1.5em,itemsep=2pt,topsep=2pt]
\item \textbf{OT-CFM}~\citep{tong2024improving}: linear interpolant
$x_t = (1-t)\,x_0 + t\,x_1$ with the exact-OT minibatch coupling on
the squared-Euclidean cost.
\item \textbf{OT-SI}~\citep{albergo2023building}: variance-preserving
(cosine-schedule) interpolant with the exact-OT coupling on the
squared-Euclidean cost.
\item \textbf{OT-Harmonic ($\omega \in \{10^{-3},\, 1,\, \pi/2\}$)}:
the isotropic harmonic conditional flow of \Cref{sec:lagrangian_ot,sec:conditional_lfm} with
frequency $\omega$, coupled by exact minibatch OT on the harmonic
action cost $c_\omega$ (\Cref{ex:minibatch_harmonic}).
\item \textbf{OT-Aniso} (CIFAR-10 only): the anisotropic harmonic
flow of \Cref{app:anisotropic_harmonic}, coupled by exact minibatch
OT on the anisotropic action cost $c_A$.
\end{itemize}
The $\omega = 10^{-3}$ setting serves as a numerical proxy for the
$\omega \to 0$ limit of the harmonic family, which recovers the
affine interpolant and hence OT-CFM up to seed variation; we report
both for completeness.

\paragraph{Training and seeding.}
Optimal-transport couplings are computed per-minibatch by the
Hungarian algorithm~\citep{kuhn1955hungarian}. For OT-CFM and OT-SI
the cost is the squared-Euclidean distance. For isotropic OT-Harmonic
the cost is the harmonic action $c_\omega$, which differs from
squared-Euclidean by a $\pi$-independent constant under fixed
empirical marginals and so shares the same minimizer
(\Cref{ex:minibatch_harmonic}). For OT-Aniso the cost is the
anisotropic action $c_A$, which does not reduce to squared-Euclidean
and yields a genuinely different coupling. We use independent random
seeds for data sampling, parameter initialization, and OT-coupling
batches; results are reported as mean $\pm$ standard deviation across
five independent retrains.

\paragraph{Evaluation metrics.}
The 2-Wasserstein distance $W_2$ between generated and target samples
is estimated by linear assignment on a held-out batch of
$N_{\mathrm{eval}} = 2{,}048$ samples. The normalized path energy
$\mathrm{NPE}_\omega$ is computed as in \Cref{app:npe-derivation} on
a sub-batch of $n = 512$ source samples drawn from the
$N_{\mathrm{eval}}$ batch and $n = 512$ fresh target samples: model
trajectories are integrated from the source sub-batch by fixed-step
RK4 at $200$ steps, the kinetic-energy integrand
$\tfrac12\|v_\theta(\phi_t,t)\|^2$ is sampled on the same grid and
integrated by composite Simpson's rule, and the harmonic-OT cost
$C_\omega$ between source and fresh target sub-batches is estimated
by Hungarian assignment on the $512 \times 512$ pairwise
harmonic-action matrix.

\subsection{Two-dimensional synthetic data}
\label{app:exp_setup_2d}

\paragraph{Distributions.}
We use four sources and three targets, all in $\mathbb{R}^2$:
$\mathcal{N}(0, I)$ as the canonical noise source; the
\emph{8gaussians} mixture (eight isotropic Gaussians equally spaced
on a radius-5 circle, $\sigma = 0.5$); the \emph{moons} dataset (two
interlocking half-circles, additive noise $0.1$); and a 2D projection
of the standard \emph{S-curve} manifold (using coordinates $0$ and
$2$, standardized then rescaled by $7$ for visual contrast). The
four pairs reported in \Cref{tab:harmonic_comparison} are
$\mathcal{N}\!\to\!\text{8gaussians}$,
$\text{moons}\!\leftrightarrow\!\text{8gaussians}$,
$\mathcal{N}\!\to\!\text{moons}$, and
$\mathcal{N}\!\to\!\text{S-curve}$. Each configuration is trained for
$20{,}000$ Adam steps at minibatch size $256$.

\paragraph{Inference and the NFE plots.}
\Cref{tab:harmonic_comparison} in the main text reports
distribution-fit and path-energy diagnostics at a fixed inference
budget. To probe the inference-cost / sample-quality trade-off
directly, we additionally sweep
$\mathrm{NFE} \in \{4, 8, 16, 32, 64, 128\}$ across three fixed-step
ODE integrators of increasing order (\textsc{Euler}, \textsc{midpoint},
\textsc{RK4}). Since the higher-order solvers require more
vector-field evaluations per integration step, the actual number of
integration steps for a given NFE budget is $\mathrm{NFE} / k$ with
$k = 1, 2, 4$ for Euler, midpoint, and RK4 respectively; this
normalization plots all three curves on a common compute budget
rather than a common step count. \Cref{fig:solver} shows the
resulting curves for all five source$\to$target settings. Each panel
reports mean $W_2$ across five training seeds at
$N_{\mathrm{eval}} = 2{,}048$, with shaded $\pm 1$ standard-deviation
bands. The figure makes two observations from
\Cref{tab:harmonic_comparison} visible across the full inference
budget: (i) OT-CFM and OT-Harmonic at $\omega = 10^{-3}$ overlap in
every panel, confirming the affine-limit prediction numerically and
serving as a sanity check on the implementation; and (ii) OT-Harmonic
at $\omega = 1$ matches or improves on this shared baseline at every
NFE, with the largest gap in the low-NFE regime where step error from
non-straight trajectories dominates.

\begin{figure}[t]
\centering
\begin{subfigure}[t]{0.33\linewidth}
  \centering
  \includegraphics[width=\linewidth]{figures/w2_vs_nfe_8gaussian_to_moons_euler.png}
\end{subfigure}\hfill
\begin{subfigure}[t]{0.33\linewidth}
  \centering
  \includegraphics[width=\linewidth]{figures/w2_vs_nfe_8gaussian_to_moons_midpoint.png}
\end{subfigure}\hfill
\begin{subfigure}[t]{0.33\linewidth}
  \centering
  \includegraphics[width=\linewidth]{figures/w2_vs_nfe_8gaussian_to_moons_rk4.png}
\end{subfigure}

\begin{subfigure}[t]{0.33\linewidth}
  \centering
  \includegraphics[width=\linewidth]{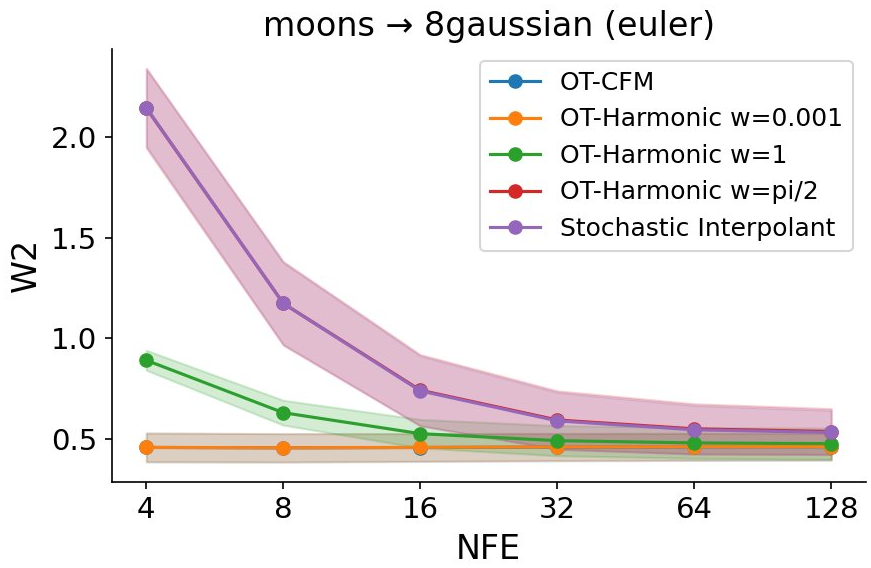}
\end{subfigure}\hfill
\begin{subfigure}[t]{0.33\linewidth}
  \centering
  \includegraphics[width=\linewidth]{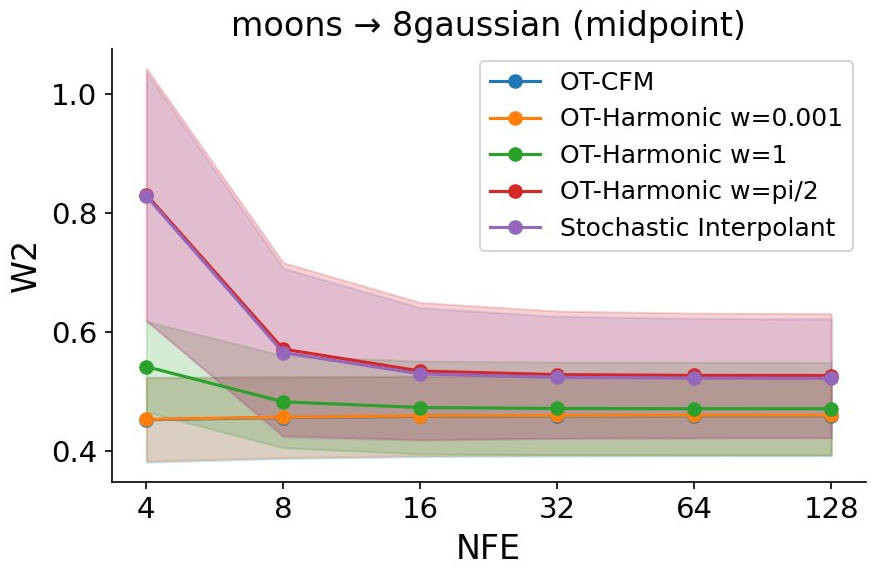}
\end{subfigure}\hfill
\begin{subfigure}[t]{0.33\linewidth}
  \centering
  \includegraphics[width=\linewidth]{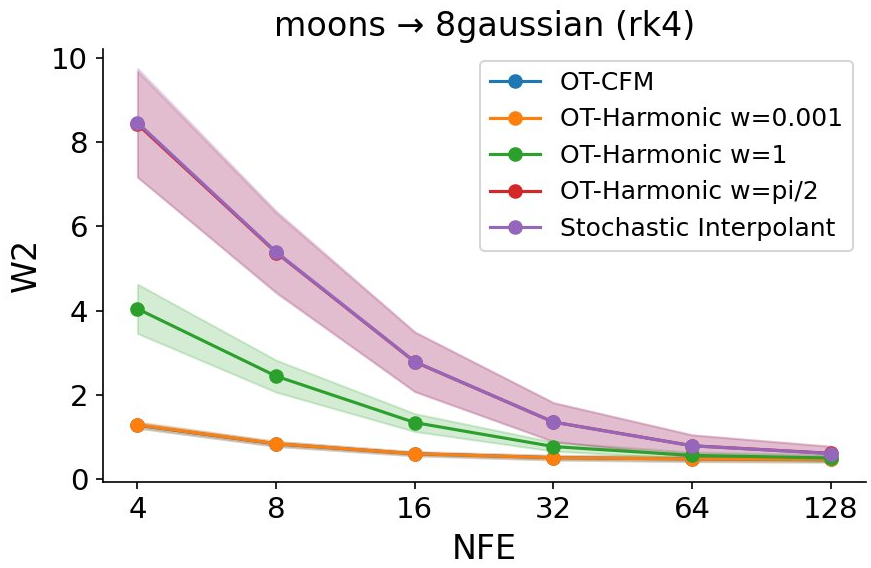}
\end{subfigure}

\begin{subfigure}[t]{0.33\linewidth}
  \centering
  \includegraphics[width=\linewidth]{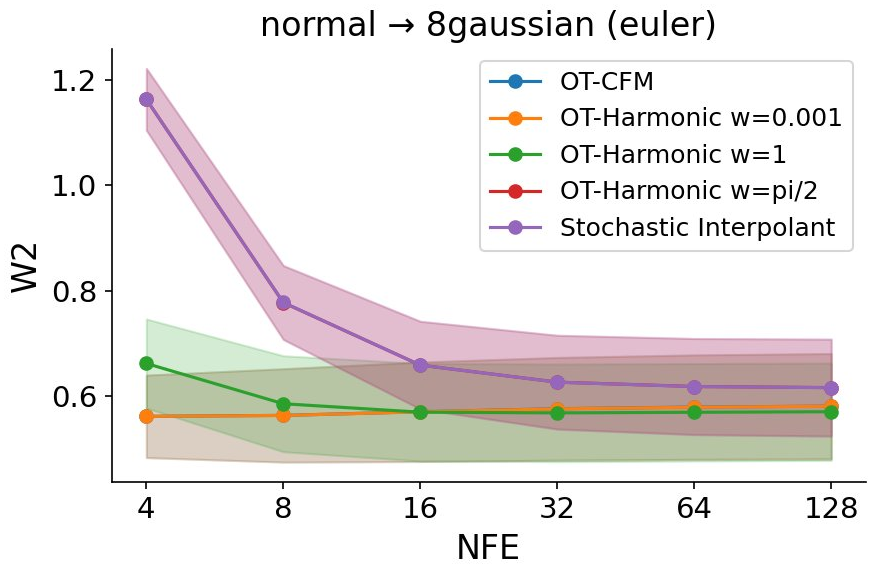}
\end{subfigure}\hfill
\begin{subfigure}[t]{0.33\linewidth}
  \centering
  \includegraphics[width=\linewidth]{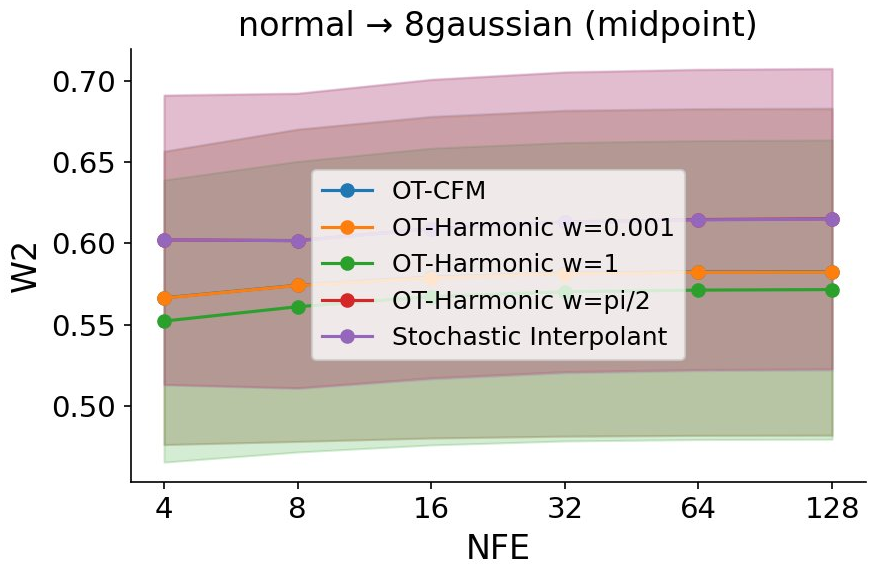}
\end{subfigure}\hfill
\begin{subfigure}[t]{0.33\linewidth}
  \centering
  \includegraphics[width=\linewidth]{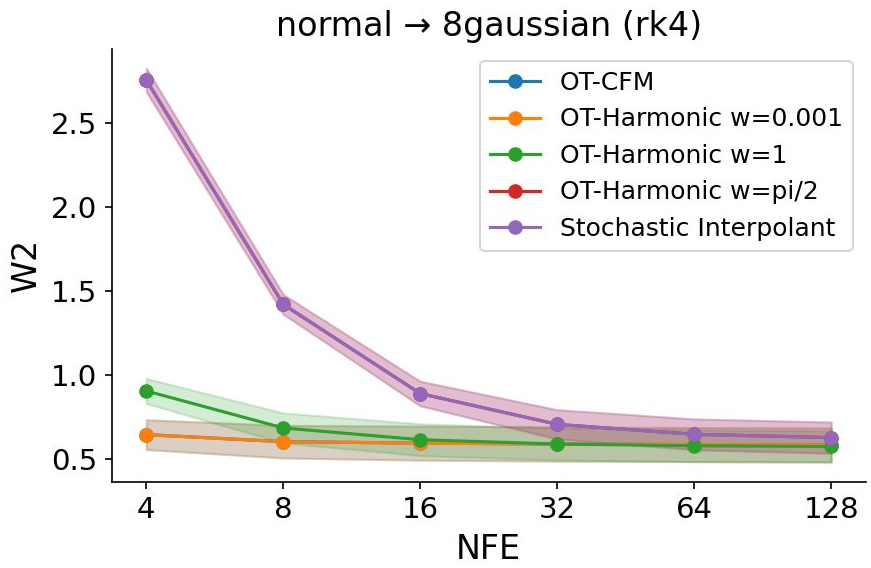}
\end{subfigure}

\begin{subfigure}[t]{0.33\linewidth}
  \centering
  \includegraphics[width=\linewidth]{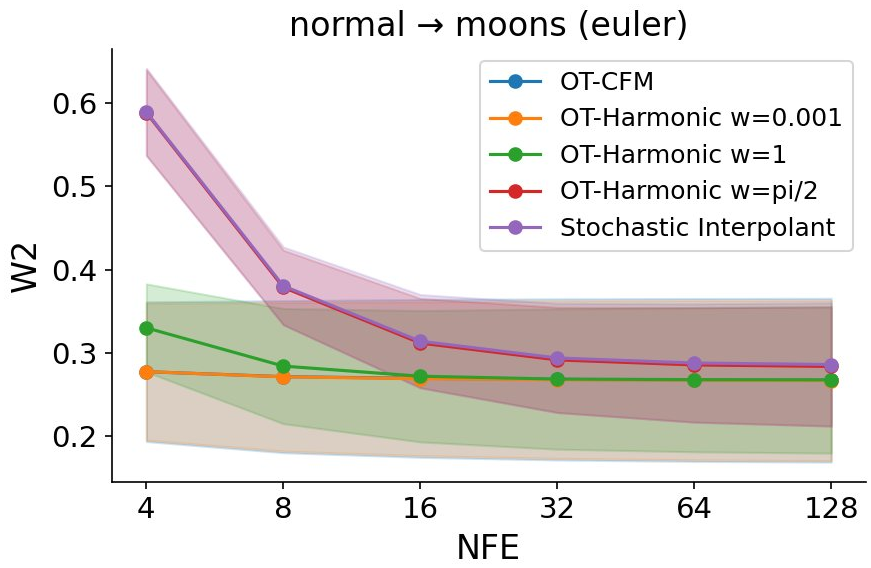}
\end{subfigure}\hfill
\begin{subfigure}[t]{0.33\linewidth}
  \centering
  \includegraphics[width=\linewidth]{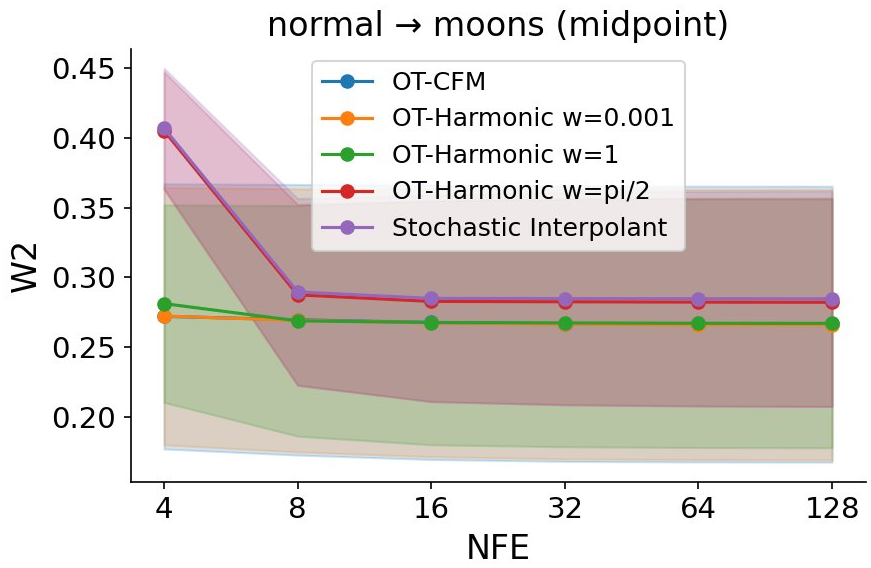}
\end{subfigure}\hfill
\begin{subfigure}[t]{0.33\linewidth}
  \centering
  \includegraphics[width=\linewidth]{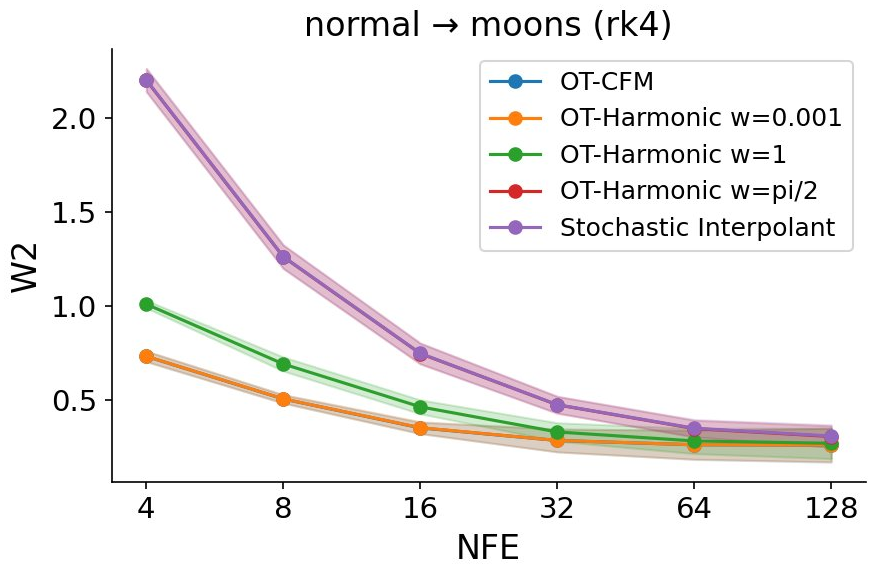}
\end{subfigure}

\begin{subfigure}[t]{0.33\linewidth}
  \centering
  \includegraphics[width=\linewidth]{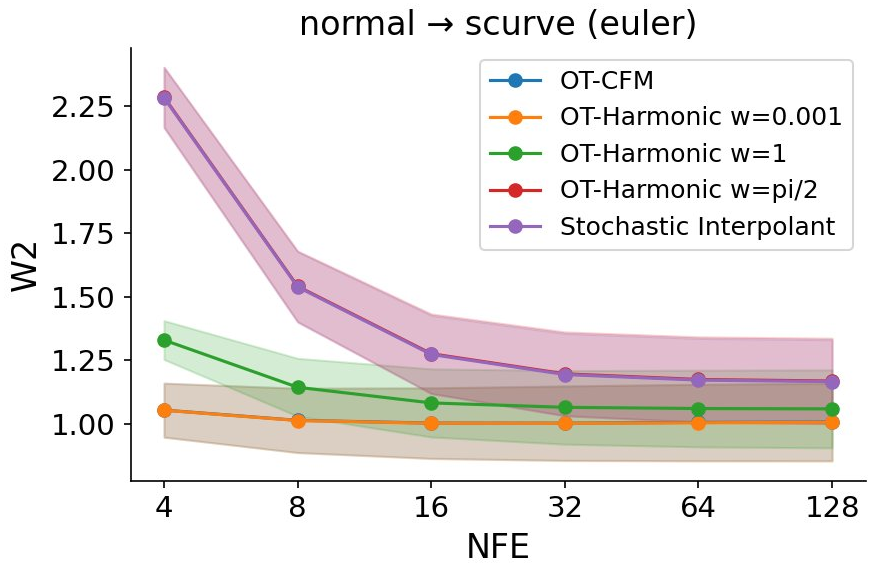}
\end{subfigure}\hfill
\begin{subfigure}[t]{0.33\linewidth}
  \centering
  \includegraphics[width=\linewidth]{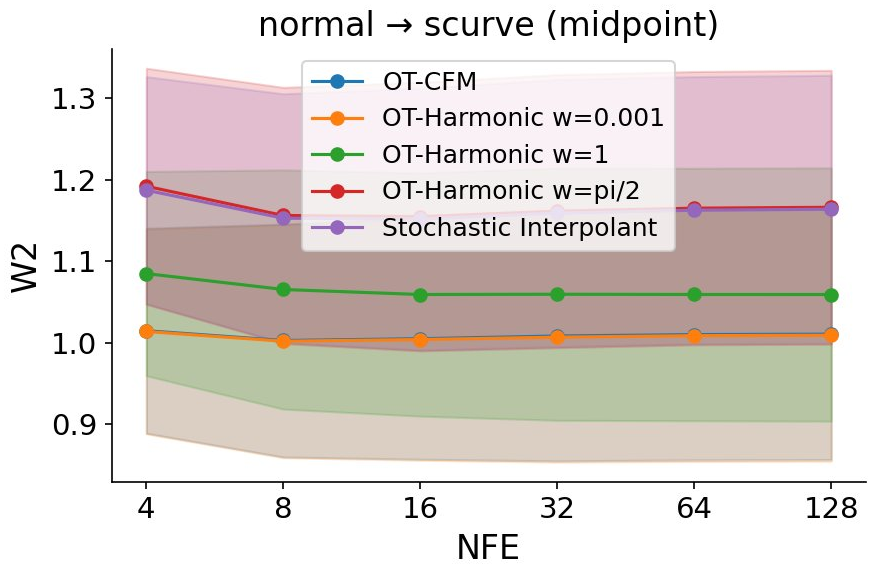}
\end{subfigure}\hfill
\begin{subfigure}[t]{0.33\linewidth}
  \centering
  \includegraphics[width=\linewidth]{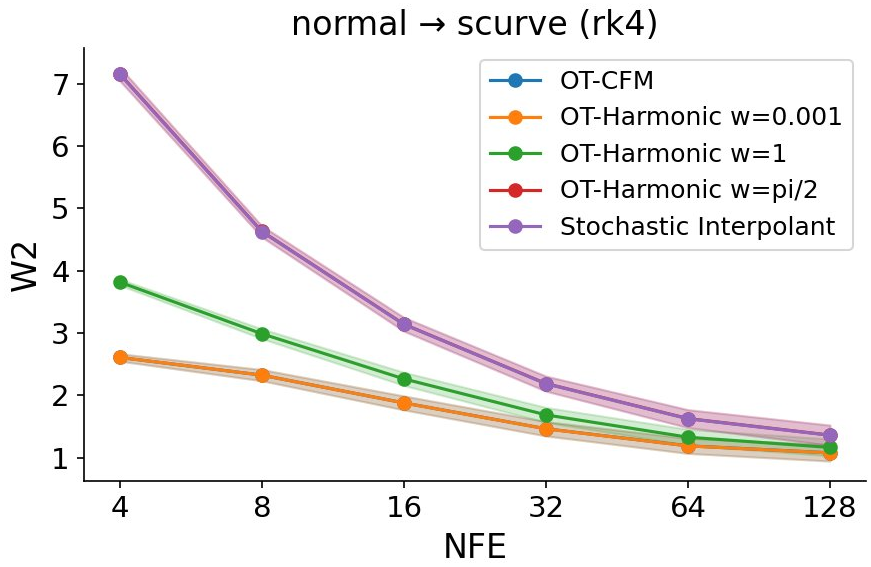}
\end{subfigure}
\caption{Sample quality vs.\ inference budget. We plot the 2-Wasserstein
distance $W_2$ between generated and target samples against the number
of function evaluations (NFE) at inference, across five
source$\to$target settings (rows) and three ODE integrators (columns).
OT-CFM and OT-Harmonic at $\omega = 10^{-3}$ are visually
indistinguishable, consistent with the harmonic interpolant degenerating
to the affine one as $\omega \to 0$. OT-Harmonic at $\omega = 1$ matches
or improves on this shared baseline at every NFE budget, with the
largest gains in the low-NFE regime where step error dominates. Shaded
bands show $\pm 1$ standard deviation across five training seeds.}
\label{fig:solver}
\end{figure}

\subsection{Single-cell trajectory interpolation}
\label{app:exp_setup_singlecell}

We follow the leave-one-out protocol of \citep{tong2024improving} on
three datasets: \emph{Embryoid body (EB)}~\citep{moon2019visualizing},
\emph{CITE-seq}, and \emph{Multiome}~\citep{burkhardt2022multimodal}.
For each dataset we hold out one intermediate timepoint $t$ at a
time, train on the union of remaining timepoints, and evaluate the
1-Wasserstein distance between the model-interpolated distribution at
$t$ and the held-out empirical distribution. \Cref{tab:single_cell_app}
reports the per-seed scalar (mean $W_1$ across held-out timepoints),
averaged as mean $\pm$ standard deviation across five training seeds
$\{42, 43, 44, 45, 46\}$.

\paragraph{Methods.}
We compare the five non-anisotropic methods of
\Cref{app:exp_setup_shared}: OT-CFM, OT-SI, and OT-Harmonic at
$\omega \in \{10^{-3}, 1, \pi/2\}$. We do not run OT-Aniso on this
track: PCA-based anisotropy fitting requires a sufficiently rich fit
batch, which the per-timepoint sample sizes do not provide.

\paragraph{Architecture and training.}
The velocity field is parameterized by an MLP with width $64$ and time-conditioned input. We train with AdamW at learning rate $10^{-3}$, weight decay $10^{-5}$, batch size $128$, and noise scale $\sigma = 0.1$. The training budget is matched to the runner protocol of
\citep{tong2024improving}: each epoch yields one minibatch per
trajectory segment over the $80\%$ training split, giving
$\mathrm{steps/epoch} = \min_t \lfloor 0.8\,|X_t| / \mathrm{batch} \rfloor$
optimizer steps; we run for $1{,}000$ epochs.

\paragraph{Evaluation.}
At test time we integrate from the earliest surviving timepoint to
the renumbered position of the held-out one with fixed-step Euler at
$100$ steps, then compute $W_1$ via exact linear assignment between
$1{,}000$ generated samples and $1{,}000$ samples from the held-out distribution.

\paragraph{Preprocessing.}
All three datasets are obtained from the public Mendeley release of
\citep{moon2019visualizing}. We use the top-$5$ principal components
of each dataset, standardized to zero mean and unit standard
deviation per coordinate. The CITE-seq and Multiome data were
originally released as part of the Open Problems Multimodal
Single-Cell Integration competition. For EB, we use the same PC
dimensionality ($r = 5$) as \citep{tong2024improving}, but the
resulting embedding is not identical to theirs due to a preprocessing
difference; absolute $W_1$ values on EB are therefore not directly
comparable to numbers reported in that work, while the relative
ordering between methods remains a meaningful comparison since all
five methods are evaluated on the same embedding.

\paragraph{Time-Stratified 1-Wasserstein Loss Across Held-Out Timepoints.}
\Cref{tab:single_cell_app} reports leave-one-out $1$-Wasserstein
distances at each held-out timepoint across the three single-cell
datasets. The harmonic interpolant in the small-$\omega$ regime
($\omega = 0.001$) is competitive with OT-CFM throughout: the two
methods are within $3 \times 10^{-3}$ of each other on every CITE and
EB timepoint, and OT-Harmonic ($\omega = 0.001$) attains the lowest
error on both CITE timepoints, on EB at $t=2$, and on both Multiome
timepoints. This is consistent with the theoretical limit in which the
harmonic flow recovers the OT-CFM trajectory as $\omega \to 0$.
Increasing $\omega$ degrades performance monotonically: $\omega = 1$
remains within roughly $5$--$15\%$ of the leaders but already loses on
most timepoints, and $\omega = \pi/2$ collapses to the OT-SI regime,
with the two methods producing essentially indistinguishable errors
across all seven columns (e.g., $1.1748$ vs.\ $1.1852$ on EB-$t=1$;
$1.8278$ vs.\ $1.8139$ on Multiome-$t=2$). The gap between the
harmonic and OT-CFM/OT-SI extremes widens with the prediction horizon:
on EB, all methods are tightly clustered at $t=2$, but by $t=3$ the
high-$\omega$ variants incur roughly $60\%$ more error than the
low-$\omega$ ones, suggesting that the oscillatory component of the
harmonic flow primarily hurts long-range extrapolation rather than
near-boundary interpolation.
\begin{table}[t]
\centering
\caption{Single-cell trajectory interpolation across three datasets, evaluated at each leave-one-out held-out timepoint. Entries are $1$-Wasserstein distance to the held-out distribution; bold indicates the lowest mean per column.}
\label{tab:single_cell_app}
\setlength{\tabcolsep}{4pt}
\renewcommand{\arraystretch}{1.15}
\resizebox{\linewidth}{!}{%
\begin{tabular}{l cc ccc cc}
\toprule
& \multicolumn{2}{c}{CITE} & \multicolumn{3}{c}{EB} & \multicolumn{2}{c}{Multiome} \\
\cmidrule(lr){2-3} \cmidrule(lr){4-6} \cmidrule(lr){7-8}
Algorithm & $t=1$ & $t=2$ & $t=1$ & $t=2$ & $t=3$ & $t=1$ & $t=2$ \\
\midrule
OT-CFM
 & $0.9262 \pm 0.035$ & $0.8720 \pm 0.035$
 & $\mathbf{0.9504 \pm 0.045}$ & $0.8865 \pm 0.045$ & $\mathbf{1.0187 \pm 0.140}$
 & $0.8574 \pm 0.060$ & $1.2868 \pm 0.080$ \\
OT-SI
 & $1.1196 \pm 0.035$ & $1.3651 \pm 0.035$
 & $1.1852 \pm 0.045$ & $1.1934 \pm 0.045$ & $1.6348 \pm 0.140$
 & $1.1573 \pm 0.060$ & $1.8139 \pm 0.080$ \\
\midrule
OT-Harmonic, $\omega = 0.001$
 & $\mathbf{0.9261 \pm 0.038}$ & $\mathbf{0.8709 \pm 0.038}$
 & $0.9510 \pm 0.045$ & $\mathbf{0.8846 \pm 0.045}$ & $1.0213 \pm 0.140$
 & $\mathbf{0.8562 \pm 0.060}$ & $\mathbf{1.2855 \pm 0.080}$ \\
OT-Harmonic, $\omega = 1$
 & $0.9338 \pm 0.030$ & $0.9969 \pm 0.030$
 & $0.9829 \pm 0.060$ & $0.8951 \pm 0.060$ & $1.0946 \pm 0.140$
 & $0.9017 \pm 0.080$ & $1.4106 \pm 0.080$ \\
OT-Harmonic, $\omega = \pi/2$
 & $1.1232 \pm 0.025$ & $1.3514 \pm 0.025$
 & $1.1748 \pm 0.045$ & $1.1524 \pm 0.045$ & $1.6785 \pm 0.140$
 & $1.1599 \pm 0.080$ & $1.8278 \pm 0.080$ \\
\bottomrule
\end{tabular}
}
\end{table}

\subsection{CIFAR-10 image generation}
\label{app:exp_setup_cifar}
All CIFAR-10 runs share a single training pipeline that selects the
appropriate flow matcher via a model flag. Architecture, optimizer,
and schedule follow \citep{tong2024improving} unchanged: a U-Net
with $128$ base channels, channel multipliers $[1, 2, 2, 2]$, two
residual blocks per resolution, four attention heads with $64$
channels each, attention at the $16{\times}16$ resolution, dropout
$0.1$; trained for $400{,}000$ steps with Adam at learning rate
$2 \times 10^{-4}$, $5{,}000$-step linear warmup, gradient-norm
clipping at $1.0$, batch size $128$, EMA on model weights with decay
$0.9999$, and image normalization to $[-1, 1]$. For
\Cref{fig:cifar10} we sample trajectories from a fixed noise seed
$x_0 \sim \mathcal{N}(0, I)$, integrated forward to $t = 1$ with
\textsc{dopri5}; intermediate columns are uniformly sampled from this
trajectory and rescaled from $[-1, 1]$ to $[0, 1]$ for display. The
six panels of \Cref{fig:cifar10} show samples from each variant at
the end of training. OT-CFM
(\Cref{fig:cifar10_otcfm}) and OT-Harmonic at $\omega = 0.001$
(\Cref{fig:cifar10_harmonic_small}) produce visually
indistinguishable samples, consistent with the $\omega \to 0$ limit
in which the harmonic flow recovers OT-CFM. As $\omega$ increases,
samples grow progressively noisier: $\omega = 1$
(\Cref{fig:cifar10_harmonic_one}) shows mild loss of detail, and
$\omega = \pi/2$ (\Cref{fig:cifar10_harmonic_pi2}) is qualitatively
similar to OT-SI (\Cref{fig:cifar10_otsi}), again matching the
theoretical correspondence at the upper limit. OT-Aniso
(\Cref{fig:cifar10_otaniso}) yields samples comparable to OT-SI
rather than to OT-CFM, suggesting that the data-fit anisotropic
potential does not exploit useful low-dimensional structure on this
benchmark.

\begin{figure}[t]
\centering
\begin{subfigure}[t]{0.33\linewidth}
  \centering
  \includegraphics[width=\linewidth]{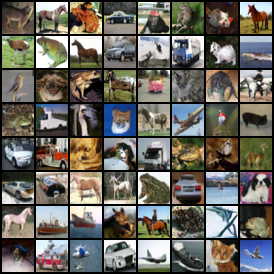}
  \caption{OT-CFM}
  \label{fig:cifar10_otcfm}
\end{subfigure}\hfill
\begin{subfigure}[t]{0.33\linewidth}
  \centering
  \includegraphics[width=\linewidth]{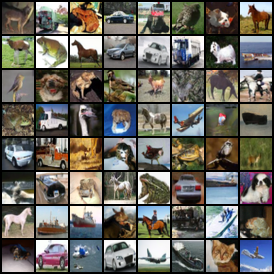}
  \caption{OT-Harmonic, $\omega = 0.001$}
  \label{fig:cifar10_harmonic_small}
\end{subfigure}\hfill
\begin{subfigure}[t]{0.33\linewidth}
  \centering
  \includegraphics[width=\linewidth]{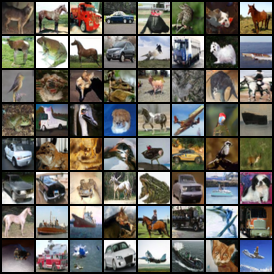}
  \caption{OT-Harmonic, $\omega = 1$}
  \label{fig:cifar10_harmonic_one}
\end{subfigure}

\vspace{0.5em}

\begin{subfigure}[t]{0.33\linewidth}
  \centering
  \includegraphics[width=\linewidth]{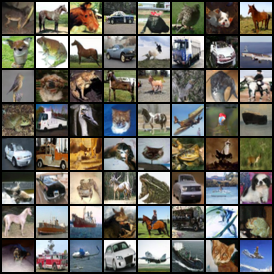}
  \caption{OT-Harmonic, $\omega = \pi/2$}
  \label{fig:cifar10_harmonic_pi2}
\end{subfigure}\hfill
\begin{subfigure}[t]{0.33\linewidth}
  \centering
  \includegraphics[width=\linewidth]{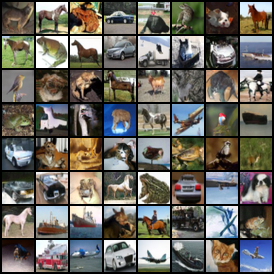}
  \caption{OT-SI}
  \label{fig:cifar10_otsi}
\end{subfigure}\hfill
\begin{subfigure}[t]{0.33\linewidth}
  \centering
  \includegraphics[width=\linewidth]{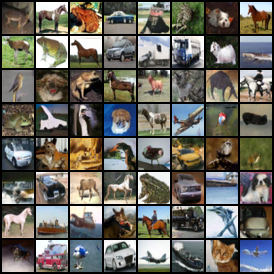}
  \caption{OT-Aniso}
  \label{fig:cifar10_otaniso}
\end{subfigure}
\caption{Uncurated CIFAR-10 samples from each flow-matching variant
after $400{,}000$ training steps, generated with EMA weights and the
\textsc{dopri5} adaptive solver from a fixed noise seed
$x_0 \sim \mathcal{N}(0, I)$. The harmonic family interpolates
between OT-CFM ($\omega \to 0$) and OT-SI ($\omega = \pi/2$) as
$\omega$ increases. OT-Aniso uses the data-fit anisotropic harmonic
potential described in \Cref{app:exp_setup_cifar}.}
\label{fig:cifar10}
\end{figure}

\paragraph{Anisotropic variant.}
The OT-Aniso row fits an anisotropic harmonic potential
$A = R^\top \mathrm{diag}(\omega_1^2,\ldots,\omega_d^2)\, R$ on the
flat image dimension $d = 3 \times 32 \times 32 = 3{,}072$ from a
single fit batch of $10$ training batches ($1{,}280$ images at batch
size $128$). Since the fit batch ($N = 1{,}280$) is smaller than
$d$, $R$ is computed by thin SVD on the centered fit batch: the
$1{,}280$ data-subspace eigenvectors are taken from the
right-singular vectors, and the remaining $d - N = 1{,}792$
null-space directions are completed by orthogonal random vectors
projected against the data subspace. Per-direction frequencies are
assigned linearly: data-subspace directions receive $\omega_k$
uniformly spaced in $[\omega_{\min}, \omega_{\max}]$ with the
highest-variance PC at $\omega_{\min} = 0.8$ and the lowest-variance
data PC at $\omega_{\max} = 1.6$, while the $1{,}792$ null-space
directions all receive $\omega_{\max}$ since no variance evidence
exists for them. The eigenvalue constraint $\omega_k \in (0, \pi)$
holds automatically since $\omega_{\max} = 1.6 < \pi$. The OT
coupling is solved on the anisotropic action cost $c_A$ rather than
squared-Euclidean. $A$ is precomputed once at the start of training
and frozen thereafter.

\subsection{Compute and reproducibility}
\label{app:exp_setup_compute}

All experiments were run on NVIDIA H100 GPUs (80 GB). Code,
configurations, and pretrained checkpoints will be released at the
project URL upon publication.
\section*{Limitations}

We discuss several limitations of the present work.

\paragraph{Restricted Lagrangian class.}
Our results cover Lagrangians of the canonical form \(\mathcal{L}(x,v,t)=K(v)-V(x)\), where \(K\) is a positive-definite quadratic form and \(V\) is sufficiently regular and controlled by the kinetic term. This setting is sufficient to derive the static--dynamic equivalence. However, a simulation-free method requires the least-action trajectories to admit closed-form expressions. Thus, the approach does not extend trivially to general Lagrangians; in that case, one would typically need to solve the Euler--Lagrange boundary-value problem numerically, thereby sacrificing the simulation-free benefits of the current framework. Developing efficient batched solvers for computing these least-action trajectories is therefore a natural direction for future work.

\paragraph{The potential is prescribed, not learned.}
Throughout the paper we treat the potential $V$ (or the matrix $A$ in
the anisotropic case) as a hyperparameter rather than a learnable
component. The harmonic frequency $\omega$ is set by hand, and the
anisotropic potential $A$ is fit once from a small batch of empirical
variances and frozen for the remainder of training. Our CIFAR-10
results (\Cref{tab:fid_nfe}) suggest that this fit-once-and-freeze
strategy is the dominant limitation of the anisotropic variant on
high-dimensional natural images: when the data covariance carries
little geometric signal beyond a near-isotropic spectrum, the
prescribed $A$ provides no meaningful inductive bias. Jointly learning
$V$ with the velocity field, in the spirit of energy-based models
(\Cref{app:ebm}), is a natural extension that we leave to
future work.

\paragraph{No single Lagrangian dominates.}
The harmonic family is not uniformly dominant over its endpoints:
$\omega \to 0$ recovers OT-CFM and is the strongest setting on
CIFAR-10, while $\omega = 1$ improves on both baselines on 2D
synthetic targets and Multiome single-cell interpolation. This is a
structural feature of the framework rather than a defect --- a
Lagrangian encodes an inductive bias, and no fixed bias is optimal
across all data geometries --- but it does mean that the choice of
Lagrangian is itself a modeling decision, analogous to the choice of
kernel in kernel methods or architecture in deep learning. A
principled procedure for selecting $\omega$ (or, more generally, the
potential $V$) from data, perhaps by joint learning as discussed in
\Cref{app:ebm}, would strengthen the practical applicability
of the framework.

\paragraph{Empirical scope.}
Our experiments span 2D synthetic targets, low-dimensional single-cell embeddings, and CIFAR-10 at resolution $32 \times 32$. We have not evaluated on higher-resolution natural images, text-to-image generation, video, or molecular tasks, and the comparative behavior
of different Lagrangians at larger scales remains open. The CIFAR-10
training budget of $400{,}000$ steps is shorter than the run reported
in \citep{tong2024improving} ($500{,}000$ steps); we use this budget uniformly across methods for a controlled comparison within our compute budget, but absolute FID values are correspondingly higher than the longer-run baseline.

\paragraph{Deterministic dynamics only.}
The framework as developed selects probability paths through the
deterministic continuity equation, which makes it a flow-matching
construction rather than a diffusion or Schr\"{o}dinger-bridge
construction. The trigonometric VP path of \Cref{ex:harmonic_to_trig} matches the marginal of the corresponding diffusion model but not the full stochastic forward process, and stochastic Lagrangian extensions --- replacing the continuity equation with a Fokker--Planck equation, in
the spirit of \Cref{app:schrodinger} --- are left to future
work.



\end{document}